\newcommand{\longdash}[1][2em]{%
  \makebox[#1]{$\m@th\smash-\mkern-7mu\cleaders\hbox{$\mkern-2mu\smash-\mkern-2mu$}\hfill\mkern-7mu\smash-$}}
\newcommand{\omitskip}{\kern-\arraycolsep}
\newcommand{\llongdash}[1][2em]{\longdash[#1]\omitskip}
\newcommand{\rlongdash}[1][2em]{\omitskip\longdash[#1]}
\newtheorem{theorem}{Theorem}
\newtheorem*{theorem*}{Theorem}
\newtheorem{proposition}[theorem]{Proposition}
\newtheorem{lemma}[theorem]{Lemma}
\newtheorem*{definition*}{Definition}
\begin{document}

\twocolumn[

\aistatstitle{Surprises in adversarially-trained linear regression}

\aistatsauthor{Ant\^onio H. Ribeiro \And   Dave Zachariah \And Thomas B. Sch\"on }

\aistatsaddress{Uppsala University \And Uppsala University \And Uppsala University } ]

\begin{abstract}
    State-of-the-art machine learning models can be vulnerable to very small input perturbations that are adversarially constructed. Adversarial training is an effective approach to defend against such examples. It is formulated as a min-max problem, searching for the best solution when the training data was corrupted by the worst-case attacks. For linear regression problems, adversarial training can be formulated as a convex problem. We use this reformulation to make two technical contributions: First, we formulate the training problem as an instance of robust regression to reveal its connection to parameter-shrinking methods, specifically that $\ell_\infty$-adversarial training produces sparse solutions. Secondly, we study adversarial training in the overparameterized regime, i.e. when there are more parameters than data. We prove that adversarial training with small disturbances gives the solution with the minimum-norm that interpolates the training data. Ridge regression and lasso approximate such interpolating solutions as their regularization parameter vanishes. By contrast, for adversarial training, the transition into the interpolation regime is abrupt and for non-zero values of disturbance. This result is proved and illustrated with numerical examples.
\end{abstract}

\section{Introduction}

Robustness is a fundamental goal in machine learning that often surpasses that of achieving high performance in a controlled scenario. Indeed, the brittleness of modern machine learning models presents a challenge to their deployment in critical situations.
The framework of adversarial attacks has generated striking  examples of their vulnerability, where very small input perturbations can cause a substantial performance drop in otherwise state-of-the-art models, see for instance~\citep{bruna_intriguing_2014,goodfellow_explaining_2015, kurakin_adversarial_2018, fawzi_analysis_2018, ilyas_adversarial_2019, yuan_adversarial_2019}.
It considers inputs contaminated with disturbances deliberately chosen to maximize the model error. 

While a lot of current research focuses on adversarial examples for deep learning models, there is a growing body of work~\cite{taheri_asymptotic_2021,javanmard_precise_2020,hassani_curse_2022,min_curious_2021,yin_rademacher_2019} that study fundamental properties of adversarial attacks in simpler, linear models.
There is a sound reason for this focus: linear models allow for analytical analysis while still reproducing phenomena of interest. Indeed, while there was initial speculation that the highly nonlinear nature of deep neural networks was the cause of its vulnerabilities to adversarial  attacks~\citep{bruna_intriguing_2014} the idea was later dismissed and the vulnerabilities can be observed in purely linear settings~\citep{goodfellow_explaining_2015,ribeiro_overparameterized_2022}.

Adversarial training~\citep{madry_deep_2018} is one of the most effective approaches for deep learning models to defend against adversarial attacks. The topic has been extensively studied by the research community, see e.g.~\citep{huang_learning_2016, madry_deep_2018, bai_recent_2021}. The idea is straightforward: It considers the situation where the model is trained training on samples that have been modified by an adversary. The hope is that the model will be more robust when faced with new adversarially perturbed samples. 

Mathematically, adversarial training is formulated as a min-max problem, searching for the best solution to the worst-case attacks.  In the case of linear regression, consider a training dataset $\{(\x_i, y_i)\}_{i=1}^{n}$ consisting of $n$ datapoints of dimension $\R^m \times \R$. Let $\param$ denote the unknown parameters. Adversarial training now corresponds to solving the optimization problem
\begin{equation}
\label{eq:AdvTrainOrig}
     \min_{\param}~\frac{1}{n}\sum_{i=1}^n{\max_{\|\dx_i \|_\infty \le \delta}(y_i - (\x_i + \dx_i)^\trnsp\param )^2}
  \end{equation}

\begin{figure*}
    \vspace{-10pt}
    \centering
    \subfloat[lasso regression]{\includegraphics[width=0.42\textwidth]{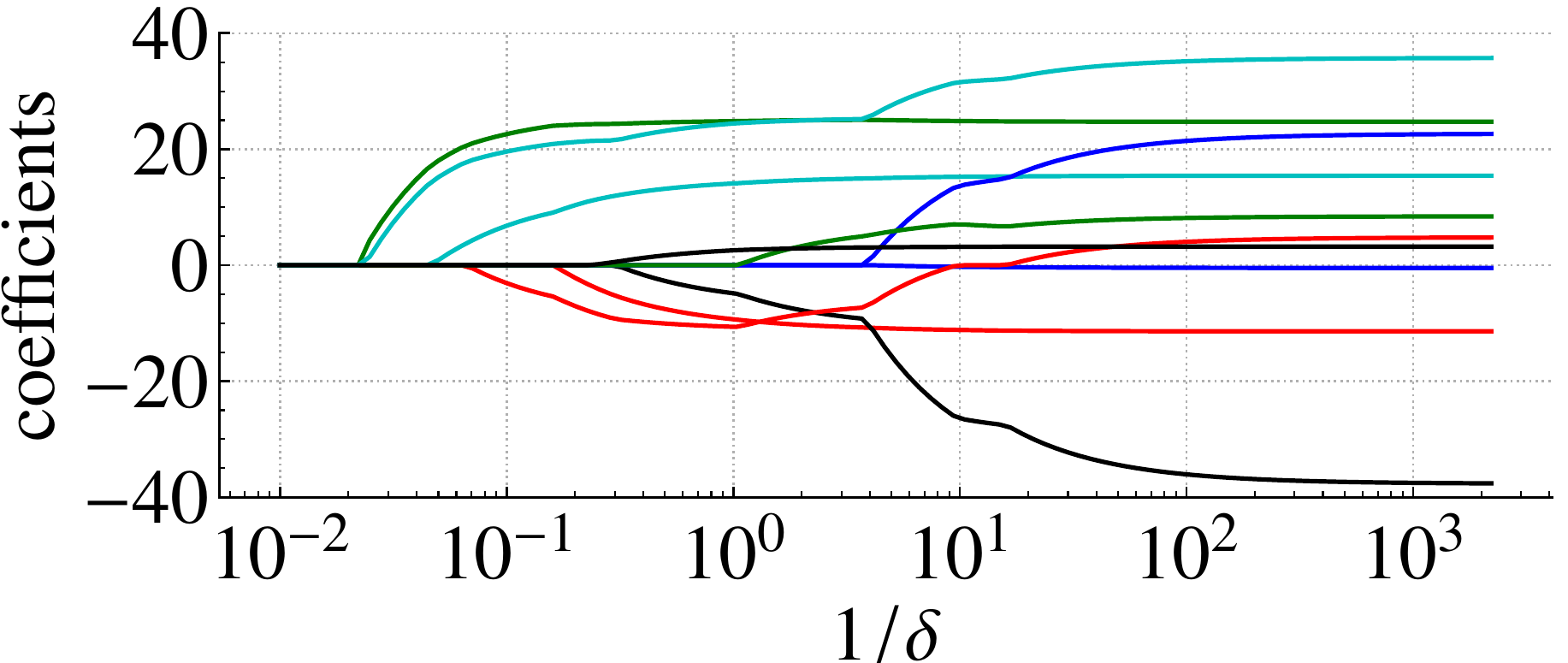}}
    \subfloat[$\ell_\infty$-adversarial training  ]{\includegraphics[width=0.42\textwidth]{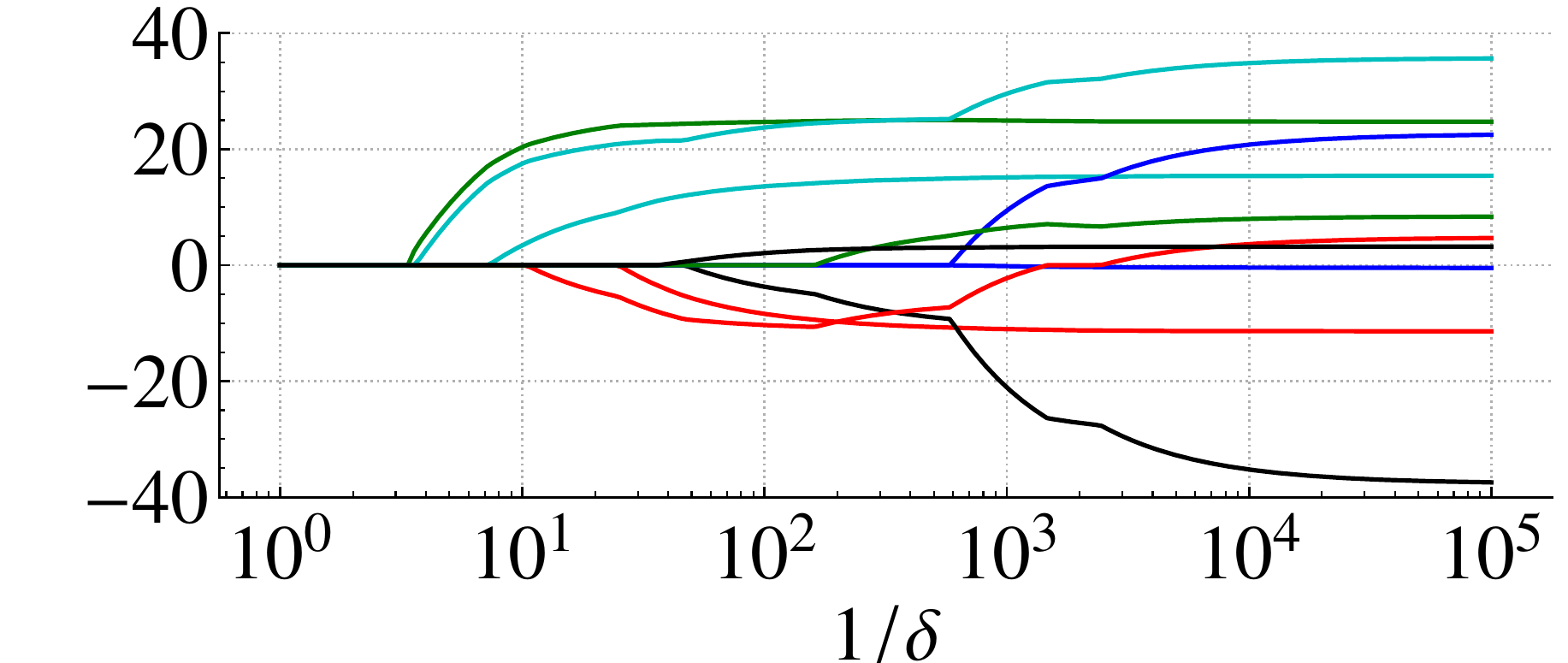}} \\
        \subfloat[ridge regression]{\includegraphics[width=0.42\textwidth]{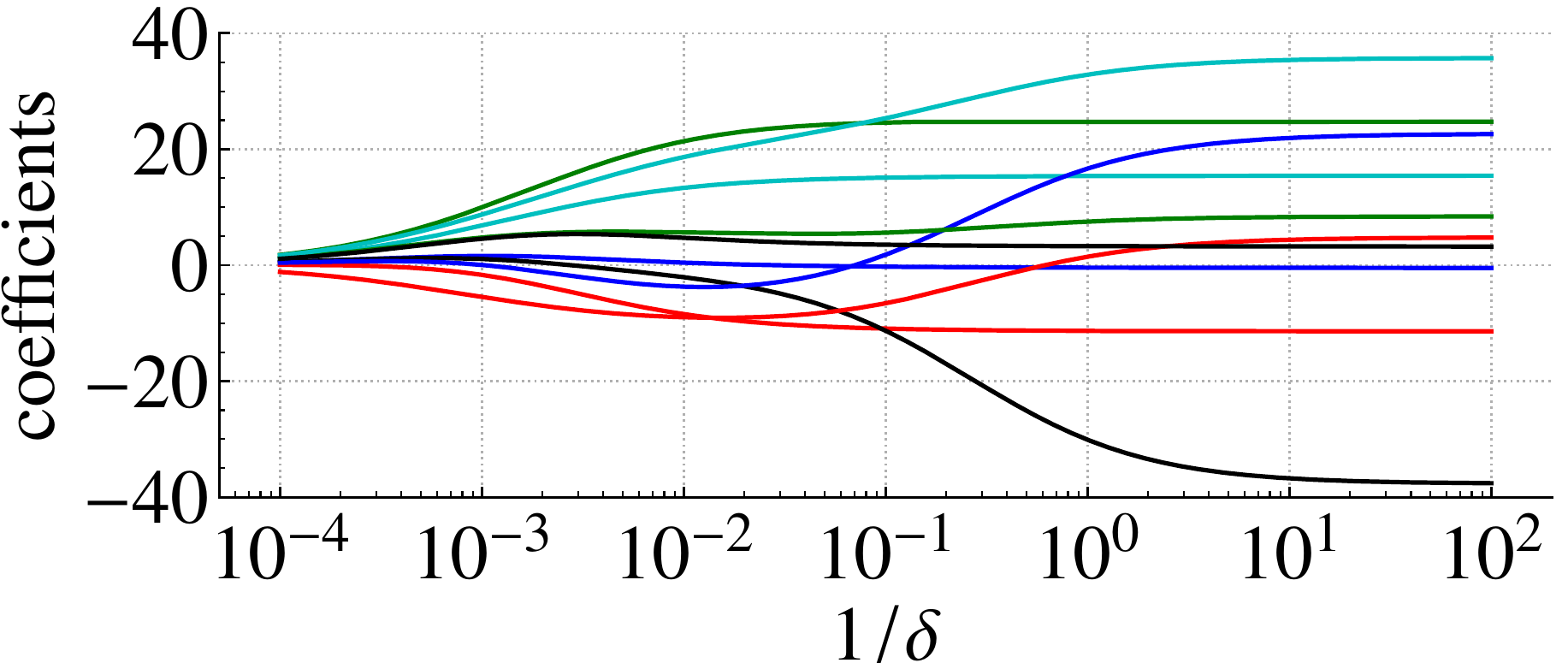}}
    \subfloat[$\ell_2$-adversarial training  ]{\includegraphics[width=0.42\textwidth]{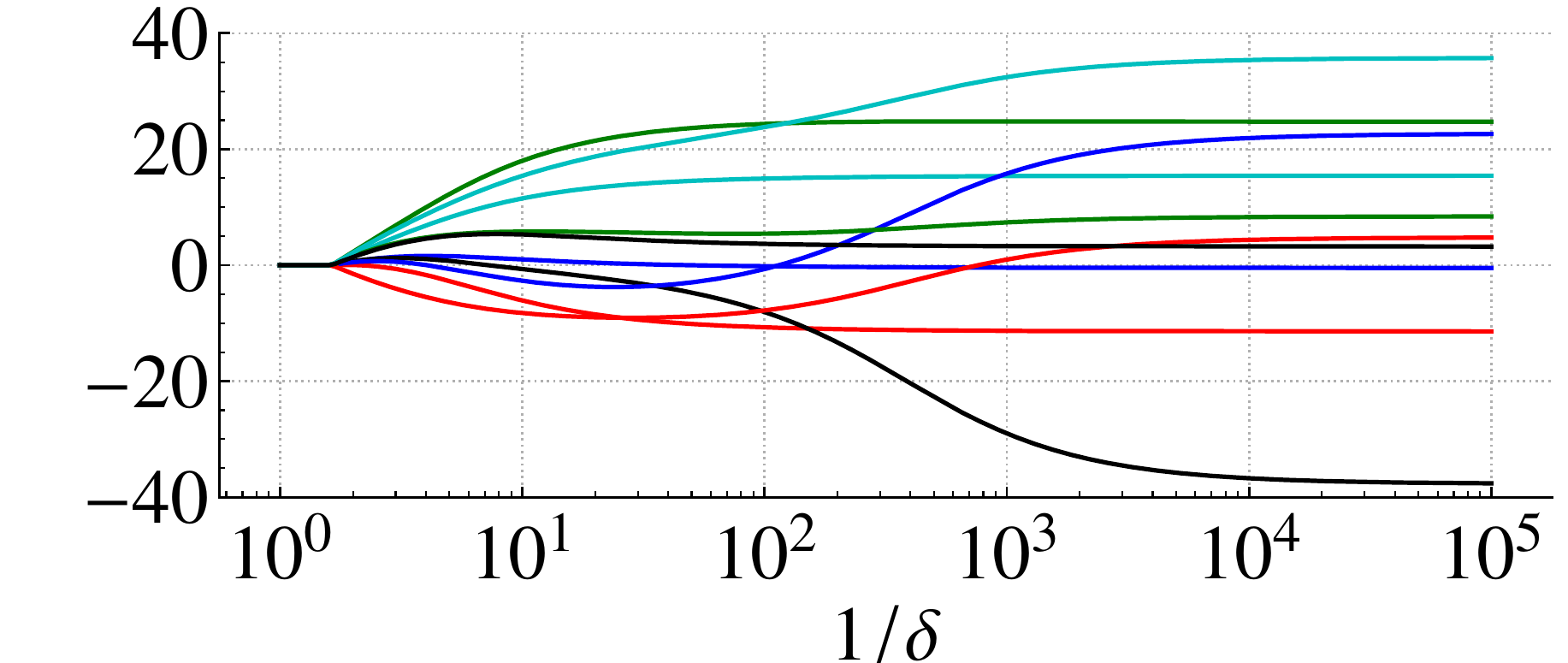}}
    \caption{\textbf{Regularization paths}. On the horizontal axis, we give the inverse of the regularization parameter (in log scale). On the vertical axis we show the coefficients of the learned linear model.}
    \label{fig:diabetes_model}
\end{figure*}

Restricting the analysis to regression allows for some important simplifications. Indeed, it was recently noted~\citep{ribeiro_overparameterized_2022,xing_generalization_2021,javanmard_precise_2020} that for linear \textit{regression}, adversarial training can be formulated as a quadratic  problem. We will refer to this as the \textit{dual formulation},
\begin{equation*}
\min_{\param}~\frac{1}{n}\sum_{i=1}^n\left(|y_i - \x_i^\trnsp\param| + \delta\|\param\|_1\right)^2.
\end{equation*}
The function minimized in the above problem is equivalent to~\eqref{eq:AdvTrainOrig} and  is easily shown to convex. This fact enables us to connect it to other well-known parameter-shrinking regularization methods. For instance, it hints at similarities between adversarial training constrained to the region $\{\dx: \|\dx\|_\infty\le \delta\}$ and lasso. Fig.\ref{fig:diabetes_model} (a)-(b) show experiments comparing the solutions and suggest that this similarity indeed exists.

The  dual formulation above is presented in~\citep{ribeiro_overparameterized_2022,xing_generalization_2021,javanmard_precise_2020} with the focus of analysing the generalization properties of adversarial attacks. This paper has a different focus: we are interested in the study of the solution to the adversarial training problem from an \emph{optimization perspective}. We show that significant insight can be gained into the problem from such a perspective.

\subsection{Setup}

More generally, we consider the case where the adversarial disturbance is bounded to a region that can be written as $\{\dx: \|\dx\|_p\le \delta\}$. Here $\|\cdot\|_p$ denotes the $\ell_p$ norm.\footnote{if $\vv{a} \in \R^n$ $p > 1$ then $\|\vv{a}\|_p = \left(\sum_{i=1}^n |\vv{a}_i|^p\right)^{\frac{1}{p}}$ and, for $p=\infty$, we define $\|\vv{a}\|_\infty = \max_{1\le i\le n}{|a_i|}$.}
Let us define the \textit{empirical adversarial risk} as
\begin{equation}
  \label{eq:empirical_advrisk}
    R_{p}^{\text{adv}}(\param; \delta) = \frac{1}{n}\sum_{i=1}^n{\max_{\|\dx_i\|_p \le \delta}(y_i - (\x_i + \dx_i)^\trnsp\param)^2}.
\end{equation}
The problem of finding the parameter $\param$ that minimizes the empirical adversarial risk $R_{p}^{\text{adv}}(\param; \delta)$ is referred to as the \emph{$\ell_p$-adversarial training} problem, which for linear regression can be simplified to the minimization of a sum of quadratic terms.
We will rely on this result, which is stated in the next lemma and was proved in~\citep{ribeiro_overparameterized_2022}. 
\begin{lemma}[Dual formulation adversarial risk]
Let $p \in [1, \infty]$ and $\frac{1}{p} + \frac{1}{q} = 1$, then 
\begin{equation}
\label{eq:advtraining-closeform}
R_{p}^{\text{adv}}(\param; \delta) =   \frac{1}{n}\sum_{i=1}^n\left(|y_i - \x_i^\trnsp\param| + \delta\|\param\|_q\right)^2
\end{equation}
\end{lemma}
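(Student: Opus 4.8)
The plan is to prove the identity term by term. Since both sides of \eqref{eq:advtraining-closeform} are the average over $i=1,\dots,n$ of a single squared quantity, it suffices to show that for each datapoint
\[
\max_{\|\dx_i\|_p \le \delta}\bigl(y_i - (\x_i + \dx_i)^\trnsp\param\bigr)^2 \;=\; \bigl(|y_i - \x_i^\trnsp\param| + \delta\|\param\|_q\bigr)^2,
\]
and then sum over $i$ and divide by $n$. Abbreviating the unperturbed residual as $r_i := y_i - \x_i^\trnsp\param$, the inner objective becomes $(r_i - \dx_i^\trnsp\param)^2$, so the task reduces to computing $\max_{\|\dx_i\|_p\le\delta}|r_i - \dx_i^\trnsp\param|$ and squaring it, the square being monotone in the absolute value.

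For the upper bound I would combine the triangle inequality with H\"older's inequality: for every feasible $\dx_i$,
\[
|r_i - \dx_i^\trnsp\param| \;\le\; |r_i| + |\dx_i^\trnsp\param| \;\le\; |r_i| + \|\dx_i\|_p\,\|\param\|_q \;\le\; |r_i| + \delta\|\param\|_q .
\]
For the matching lower bound I need a feasible $\dx_i$ attaining equality. Here I would invoke the duality between the $\ell_p$ and $\ell_q$ norms: there is a vector $\vv{u}$ with $\|\vv{u}\|_p = 1$ and $\vv{u}^\trnsp\param = \|\param\|_q$ — explicitly $\vv{u}_j = \operatorname{sign}(\param_j)\,|\param_j|^{q-1}/\|\param\|_q^{q-1}$ when $1<p<\infty$; $\vv{u}_j = \operatorname{sign}(\param_j)$ when $p=\infty$ (so $q=1$); and $\vv{u} = \operatorname{sign}(\param_{j^\star})\,\vv{e}_{j^\star}$ for a coordinate $j^\star$ of maximal magnitude when $p=1$ (so $q=\infty$). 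Taking $\dx_i = -\operatorname{sign}(r_i)\,\delta\,\vv{u}$ gives $\|\dx_i\|_p = \delta$ and $r_i - \dx_i^\trnsp\param = r_i + \operatorname{sign}(r_i)\,\delta\|\param\|_q$, whose absolute value is exactly $|r_i| + \delta\|\param\|_q$. The degenerate cases are immediate: if $\param = 0$ both sides equal $r_i^2$, and if $r_i = 0$ either sign of $\dx_i$ works. Combining the two bounds yields the per-datapoint identity, and averaging gives \eqref{eq:advtraining-closeform}.

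The only genuinely delicate point — the ``hard part'' — is the attainment in the lower bound: verifying the equality case of H\"older uniformly over $p\in[1,\infty]$, in particular the endpoints $p=1$ and $p=\infty$ where the maximizing $\vv{u}$ is not given by the smooth formula, and making sure the sign of $\dx_i$ is chosen so that the perturbation reinforces rather than cancels the residual $r_i$. Everything else is a direct computation. (An alternative route to the lower bound, which I would keep in reserve, is to note that $\dx_i \mapsto (r_i - \dx_i^\trnsp\param)^2$ is convex, hence attains its maximum over the compact convex ball $\{\dx_i:\|\dx_i\|_p\le\delta\}$ at an extreme point; but the explicit construction above is cleaner and yields the value directly.)
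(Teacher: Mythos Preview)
Your proposal is correct and follows essentially the same approach described in the paper: the upper bound via H\"older's inequality and the matching lower bound via an explicit construction of the maximizing perturbation, with the endpoint cases $p\in\{1,\infty\}$ handled separately. The only embellishments you add---the monotonicity-of-squaring remark, the sign choice to reinforce the residual, and the handling of the degenerate cases $\param=0$ and $r_i=0$---are all routine and consistent with the paper's summary of the argument.
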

An immediate consequence of this result is that for $p \in [1, \infty]$, $R_p^{\text{adv}}(\param; \delta)$ is convex in $\param$. 
The result  holds for any pair of complementary norms $(p, q)$. We note that this includes the values ${(p, q)= (1, \infty)}$ and ${(\infty, 1)}$. We also highlight that not only is the right-hand-side of~ \eqref{eq:empirical_advrisk} exactly the same as that of~\eqref{eq:advtraining-closeform}, but also the parameter $\delta$  is exactly the same, i.e., upper bound of the adversarial disturbance.

The proof provided in~\citep{ribeiro_overparameterized_2022} is based on Hölder's inequality ${|\param^\trnsp\dx| \le \|\param\|_p \|\dx\|_q}$ and on the fact that, given $\param$, we can construct $\dx$ such that the equality holds. The result also appear in other recent work in slightly less general settings:~\citet{xing_generalization_2021} presents it for a Gaussian distribution, and~\citep{javanmard_precise_2020} presents it for $\ell_2$ attacks.

\subsection{Summary of the contributions}
\label{sec:contributions}

We study the optimization problems that arise in adversarial training for linear regression. We will in Section~\ref{sec:adversarial-training} start by analysing the similarity between $\ell_\infty$-adversarial training and lasso. One interesting result we observed is that \emph{$\ell_\infty$-adversarial training produces sparse solutions}---see Fig.~\ref{fig:diabetes_model}(b). This fact is interesting but not clearly stated in the literature. Hence, our first contribution is a way to justify this observation.

\begin{enumerate}
\item[C-1]   We \emph{show that adversarial training is an instance of robust regression}.  We use this to connect it with square-root lasso, which  is also an instance of robust regression~\cite{xu_robust_2008}. Several examples are then used to illustrate the similarities between the methods.
\end{enumerate}
Note that the  dual formulation hold for any pair $(p, q)$ of complementary norms. Another possible pair would be $(p=2, q=2)$ and a similar argument can be used to point towards similarities between ridge regression and $\ell_2$-adversarial training. Such similarities can be observed in Fig.\ref{fig:diabetes_model} (c)-(d).

The second contribution of this work  is to study the behavior of adversarial training in overparameterized models, where there are more parameters than data ($m > n$). 

\begin{enumerate}
\item[C-2] We \emph{prove that  the minimum-norm interpolators are the solution to adversarial training for all values of $\delta$ below a threshold.}
\end{enumerate}

Here, we refer to the interpolators $\eparam^{\text{min-} \ell_1}$ and  $\eparam^{\text{min-} \ell_2}$, which are the solution that minimize, respectively, $\|\param\|_1$ and $\|\param\|_2$ and satisfy $y_i = \x_i ^ \trnsp \param $ for $i = 1, \dots n$.  
Both lasso and ridge with the regularization parameter approach the minimum $\ell_1$ and $\ell_2$-norm interpolators as the penalization on the norm goes to zero. We show that such connections also exist for adversarial training as expressed in the next theorem.
    
\begin{theorem}
\label{thm:min-norm-sols}
Assume that the matrix $\X \in \R^{n \times m}$ has full row rank (i.e, $\text{rank}(\X) = n$). There exists $\bar{\delta} > 0 $ such that: 

\begin{enumerate}
    \item  $\eparam^{\text{min-} \ell_1}$   minimizes $R_\infty^{\text{adv}}(\param, \delta)$, for all ${0 < \delta < \bar{\delta}}$;
    \item $\eparam^{\text{min-} \ell_2}$ minimizes $R_2^{\text{adv}}(\param, \delta)$, for all ${0 < \delta < \bar{\delta}}$.
\end{enumerate}
\end{theorem}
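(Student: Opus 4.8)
The plan is to verify the first-order optimality condition for convex functions. By the dual-formulation lemma, $R_{p}^{\text{adv}}(\,\cdot\,;\delta)$ is convex, so it suffices to show that $0 \in \partial_\param R_{p}^{\text{adv}}(\eparam;\delta)$ at the relevant minimum-norm interpolator $\eparam$ --- namely $\eparam = \eparam^{\text{min-}\ell_1}$ when $p=\infty$ (so $q=1$ in~\eqref{eq:advtraining-closeform}) and $\eparam = \eparam^{\text{min-}\ell_2}$ when $p=2$ (so $q=2$) --- for every $\delta$ below some threshold. The degenerate case $\vv{y} := (y_1,\dots,y_n)^\trnsp = 0$ is immediate, since then $\eparam = 0$ and $R_{p}^{\text{adv}}(0;\delta)=0$ is a global minimum for all $\delta>0$; so I assume $\vv{y}\neq 0$, which forces $\eparam\neq 0$. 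Full row rank of $\X$ also guarantees that the interpolation constraint $\X\param = \vv{y}$ is feasible.

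The first step is to extract a dual certificate from the minimum-norm property of $\eparam$. Since $\eparam$ solves the convex program $\min_\param \|\param\|_q$ subject to $\X\param=\vv{y}$, its KKT conditions supply a multiplier $\vv{\lambda}\in\R^n$ with $\X^\trnsp\vv{\lambda}\in\partial\|\cdot\|_q(\eparam)$. Because $\eparam\neq 0$, every subgradient of $\|\cdot\|_q$ at $\eparam$ has dual norm exactly one and is therefore nonzero, so $\vv{\lambda}\neq 0$ and $c:=\|\vv{\lambda}\|_\infty>0$. The threshold will be $\bar\delta = 1/(nc)$ (one such value for each of $q\in\{1,2\}$; the theorem's $\bar\delta$ is then the smaller of the two).

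The second and main step is to compute $\partial_\param R_{p}^{\text{adv}}(\eparam;\delta)$. Writing $R_{p}^{\text{adv}}(\param;\delta)=\tfrac1n\sum_{i=1}^n g_i(\param)^2$ with $g_i(\param)=|y_i-\x_i^\trnsp\param|+\delta\|\param\|_q\geq 0$, the key point is that at an interpolator every residual vanishes, so $g_i(\eparam)=\delta\|\eparam\|_q$ for every $i$ and each term $|y_i-\x_i^\trnsp\cdot|$ sits at a kink, contributing the whole segment $\{s\x_i : |s|\le 1\}$ to $\partial g_i(\eparam)$; combined with $\X^\trnsp\vv{\lambda}\in\partial\|\cdot\|_q(\eparam)$ this yields $s\x_i + \delta\X^\trnsp\vv{\lambda}\in\partial g_i(\eparam)$ for every $|s|\le 1$. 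Using the inclusion $2g_i(\eparam)\,\partial g_i(\eparam)\subseteq\partial(g_i^2)(\eparam)$, valid for any nonnegative convex $g_i$ (a one-line check of the subgradient inequality), together with the fact that a sum of subgradients is a subgradient of the sum, I conclude that for any $\vv{s}\in\R^n$ with $\|\vv{s}\|_\infty\leq 1$ the vector $\tfrac{2\delta\|\eparam\|_q}{n}\bigl(\X^\trnsp\vv{s} + \delta n\,\X^\trnsp\vv{\lambda}\bigr)$ belongs to $\partial_\param R_{p}^{\text{adv}}(\eparam;\delta)$. Choosing $\vv{s} = -\delta n\,\vv{\lambda}$ makes this vector equal to $0$, and this choice is admissible (i.e.\ $\|\vv{s}\|_\infty = \delta n c \le 1$) exactly when $\delta\le\bar\delta$; hence $0\in\partial_\param R_{p}^{\text{adv}}(\eparam;\delta)$ and $\eparam$ is a global minimizer for all $0<\delta\le\bar\delta$. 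Carrying this out for $q=1$ and $q=2$ establishes both parts.

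I expect the main obstacle to be the subdifferential computation of the last paragraph: making precise that the vanishing of the residuals at $\eparam$ is exactly what supplies the extra subgradient directions $\{s\x_i : |s|\le1\}$ needed to cancel the penalty gradient $\delta\X^\trnsp\vv{\lambda}$, along with the (minor) care needed in differentiating the square of the nonnegative convex function $g_i$. The other ingredients --- convexity of $R_{p}^{\text{adv}}$, KKT for norm minimization under affine constraints, additivity of subdifferentials --- are routine. One could alternatively first lower-bound $R_{p}^{\text{adv}}(\param;\delta)$ by $\bigl(\tfrac1n\|\vv{y}-\X\param\|_1 + \delta\|\param\|_q\bigr)^2$ via Jensen's inequality, reducing the claim to showing that $\eparam$ minimizes the simpler convex function $\tfrac{1}{n\delta}\|\vv{y}-\X\param\|_1 + \|\param\|_q$; this is handled by the same certificate $\vv{\lambda}$ and leads to the identical condition $\delta\le 1/(nc)$.
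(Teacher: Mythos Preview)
Your proof is correct but follows a genuinely different route from the paper's. The paper proves the theorem via an intermediate proposition stating that, for $\delta$ below an explicit data-dependent threshold ($\gamma_{\min}(\X)$, or $\gamma_{\min}(\X\mm{Q}^\trnsp)$ in the $\ell_2$ case), some minimizer of $R_p^{\text{adv}}$ must lie in the interpolation set $\{\param:\X\param=\y\}$; since on that set the risk reduces to $\delta^2\|\param\|_q^2$, the minimum-norm interpolator follows at once. You instead work directly at the candidate $\eparam^{\text{min-}\ell_q}$: you extract the KKT multiplier $\vv{\lambda}$ of the norm-minimization problem so that $\X^\trnsp\vv{\lambda}\in\partial\|\cdot\|_q(\eparam)$, and then use the free sign variables $s_i\in[-1,1]$ from the vanishing residuals to cancel the penalty contribution, obtaining $0\in\partial R_p^{\text{adv}}(\eparam;\delta)$ whenever $\delta\le 1/(n\|\vv{\lambda}\|_\infty)$. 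Your argument is more streamlined for the theorem as stated and treats both $q=1$ and $q=2$ uniformly, at the cost of a threshold that is implicit (through $\vv{\lambda}$) rather than read off directly from the entries of $\X$; the paper's route, on the other hand, yields explicit thresholds and additional structural information about where minimizers can lie. Your Jensen-based alternative is also valid and lands on the same condition.
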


Interestingly, for adversarial training the interpolation does not occur only in the limit, but rather for all values $\delta$ smaller than a certain threshold. We prove the Theorem and discuss its consequences in  Section~\ref{sec:overparameterized-models}.

\subsection{Related work}

\paragraph{Adversarial training in linear models.} 

Gaining insight about adversarial attacks through the analysis of its behavior in linear models is now commonplace: for instance, ~\citet{tsipras_robustness_2019} and \citet{ilyas_adversarial_2019} use linear models to explain the conflict between robustness and high-performance models observed in neural networks; \citet{ribeiro_overparameterized_2022} uses these models to show how overparameterization affects robustness to perturbations; \citet{taheri_asymptotic_2021} derives asymptotics for adversarial training in binary classification.
\citet{javanmard_precise_2020} provides asymptotics for adversarial training in linear regression,
\citet{javanmard_precise_2020a} studies classification settings and 
\citet{hassani_curse_2022} for random feature regressions.
\citet{min_curious_2021} investigates how the dataset size affects adversarial performance.
\citet{yin_rademacher_2019} provides an analysis of $\ell_\infty$-attack on linear classifiers based on the Rademacher complexity.

\paragraph{Adversarial training and sparsity.}
In this paper, we establish that, in linear regression, $\ell_\infty$-adversarial attacks yield a sparse solution. We found this fact quite intriguing and were unable to find a clear reference for it, although several researchers in correspondence seem to intuitively expect it.  Related facts have since long been established for robust regression~\citep{xu_robust_2008} and in Section~\ref{eq:robust-regression} we show how adversarial training differs. Framing adversarial training as a robust optimization problem is also commonplace~\cite{shaham_understanding_2018, madry_deep_2018}. 
There are studies highlighting that sparse models can be more robust to adversarial attacks~\citep{guo_sparse_2018, gopalakrishnan_combating_2018}. \citet{xing_generalization_2021} propose to add an $\ell_1$-penalty to the adversarial loss to improve generalization, and~\citep{xing_adversarially_2021} proposes to add it as part of two-stage training with $\ell_1$ penalty being applied to the first stage and adversarial training in the second stage. \citet{xing_generalization_2021} points to the resemblance between the dual formulation of $\ell_\infty$-adversarial training and Lasso as a concluding remark but does not explore the connection in detail.

\section{Adversarial training and regularization}
\label{sec:adversarial-training}

In this section, we explore the relationship between adversarial training and parameter shrinking regularization methods. Indeed,  the dual form above yields a cost function that is remarkably similar to some of the traditional parameter shrinking regularization methods. For $\ell_{\infty}$ adversarial attacks (i.e. $q = 1$), the cost function $R^{\text{adv}}_{\infty}(\beta; \delta)$ in its dual form is similar to lasso regression~\citep{tibshirani_regression_1996}:
\[R^{\text{lasso}}(\param; \delta) = \frac{1}{n}\sum_{i=1}^n|y_i - \x_i^\trnsp\param|^2 +  \delta\|\param\|_1.\]
Both cost functions penalize the $\ell_1$ norm and the error between observation and predictions. We used $\delta$ to denote the lasso regularization parameter in order to highlight the similarity with the dual form.

We study the methods from two angles:
First, we analyze $\ell_\infty$-adversarial attacks and lasso through the lens of robust regression; Secondly, we compare the \textit{regularization paths} of the methods in an experimental setup, i.e., the path described by the solution as a function of the parameter $\delta$.

\subsection{Robust regression}
\label{eq:robust-regression}

In robust linear regression, the cost function being minimized during training is
\begin{equation}
    R^{\text{robust}}(\param; \mathcal{S}) = \max_{\mm{\Delta} \in \mathcal{S}}\|\y - (\X + \mm{\Delta})\param\|_2,
\end{equation}
where $\X\in\R^{n \times m}$ denotes the matrix for which the $i^{\text{th}}$ row is  $\x_i^\trnsp$ and $\y\in\R^{n}$ denotes the vector of outputs. 
Here, the disturbance matrix $\mm{\Delta}$ is constrained to belong to the  `uncertainty set' $\mathcal{S}$ . 

The robust regression framework allows us to establish interesting connections. On the one hand, robust regression is equivalent to square-root lasso for `feature-wise' uncertainty sets,
\begin{equation*}
\footnotesize
  \mathcal{C}_{2}(\delta) = \left\{
  \begin{bmatrix}
    \bigl| & & \bigl|\\
    \vv{\zeta}_1& \cdots& \vv{\zeta}_m \\
    \bigl| & & \bigl|
  \end{bmatrix}
  : \|\vv{\zeta}_i\|_2 \le \delta  \text{ for } i = 1, \cdots, m \right\}.
\end{equation*}

On the other hand, robust regression is equivalent to adversarial training for `sample-wise' uncertainty sets,
\begin{equation*}
\scriptsize
  \label{eq:uncertanty_adv}
  \mathcal{R}_{p}(\delta) = \left\{
  \begin{bmatrix}
    \llongdash & \dx_1 & \rlongdash\\
    &\vdots&\\
    \llongdash & \dx_n & \rlongdash\\
  \end{bmatrix}:
     \|\dx_i\|_p \le \delta  \text{ for } i = 1, \cdots, n \right\}.
\end{equation*}
This result is summarized in the next proposition.
\begin{proposition}\label{thm:robust_training_advtrain} 
  We have that for any $\delta$
 \begin{subequations}
 \begin{align}
    \text{\rm arg}\min_{\param}  R^{\text{robust}}(\param; \mathcal{C}_{2}(\delta))   &=  \text{\rm arg} \min_{\param} R^{\sqrt{\text{lasso}}}(\param, \delta); \\
    \text{\rm arg} \min_{\param}  R^{\text{robust}}(\param, \mathcal{R}_{p}(\delta)) & = \text{\rm arg}\min_{\param}  R_p^{\text{adv}}(\param, \delta).
    \end{align}
  \end{subequations}
  
\end{proposition}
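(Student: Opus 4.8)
The plan is to reduce each of the two identities to an explicit evaluation of the inner maximum defining $R^{\text{robust}}$, after which the equality of the minimizer sets follows because the two objectives being compared differ only by a strictly increasing transformation (and a fixed positive constant). Note first that in both cases $R^{\text{robust}}(\param;\mathcal S)$ is the maximum of a continuous function of $\mm{\Delta}$ over a compact set $\mathcal S$, so the maximum is attained; it therefore suffices, for each claim, to produce a matching upper bound together with a worst-case perturbation $\mm{\Delta}^\star\in\mathcal S$ achieving it.

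\textbf{The sample-wise identity.} The set $\mathcal R_p(\delta)$ is a Cartesian product over rows: a matrix lies in it iff each row $\dx_i$ satisfies $\|\dx_i\|_p\le\delta$, with no coupling between distinct rows. Since $\|\y-(\X+\mm{\Delta})\param\|_2^2=\sum_{i=1}^n\bigl(y_i-(\x_i+\dx_i)^\trnsp\param\bigr)^2$ is separable across these independently-constrained blocks, I would write
\[
  R^{\text{robust}}(\param;\mathcal R_p(\delta))^2
  = \max_{\mm{\Delta}\in\mathcal R_p(\delta)}\sum_{i=1}^n\bigl(y_i-(\x_i+\dx_i)^\trnsp\param\bigr)^2
  = \sum_{i=1}^n\ \max_{\|\dx_i\|_p\le\delta}\bigl(y_i-(\x_i+\dx_i)^\trnsp\param\bigr)^2 .
\]
The $i$-th inner maximum is precisely the per-sample term in~\eqref{eq:empirical_advrisk}, so by the dual-formulation lemma it equals $\bigl(|y_i-\x_i^\trnsp\param|+\delta\|\param\|_q\bigr)^2$ (concretely: Hölder's inequality gives $|\dx_i^\trnsp\param|\le\delta\|\param\|_q$, and choosing $\dx_i$ aligned with $\param$ and with the sign of the residual $y_i-\x_i^\trnsp\param$ attains it). Hence $R^{\text{robust}}(\param;\mathcal R_p(\delta))^2=n\,R_p^{\text{adv}}(\param;\delta)$, and since $t\mapsto\sqrt{t/n}$ is strictly increasing on $[0,\infty)$ the two objectives have the same minimizers.

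\textbf{The feature-wise identity.} Here I would sandwich $R^{\text{robust}}(\param;\mathcal C_2(\delta))$ between $\|\y-\X\param\|_2+\delta\|\param\|_1$ from both sides. Writing $\mm{\Delta}\param=\sum_{j=1}^m\param_j\vv{\zeta}_j$ with $\vv{\zeta}_j$ the $j$-th column of $\mm{\Delta}$, the triangle inequality and $\|\vv{\zeta}_j\|_2\le\delta$ give
\[
  \|\y-(\X+\mm{\Delta})\param\|_2 \le \|\y-\X\param\|_2 + \sum_{j=1}^m|\param_j|\,\|\vv{\zeta}_j\|_2 \le \|\y-\X\param\|_2+\delta\|\param\|_1 .
\]
For the matching lower bound, take $\vv{u}=-(\y-\X\param)/\|\y-\X\param\|_2$ when $\y\ne\X\param$ and $\vv{u}$ an arbitrary unit vector otherwise, and use the rank-one perturbation $\mm{\Delta}=\delta\,\vv{u}\,\operatorname{sign}(\param)^\trnsp\in\mathcal C_2(\delta)$, whose columns $\delta\operatorname{sign}(\param_j)\vv{u}$ have $\ell_2$-norm at most $\delta$. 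Then $\mm{\Delta}\param=\delta\|\param\|_1\vv{u}$, so $\y-(\X+\mm{\Delta})\param=(\y-\X\param)-\delta\|\param\|_1\vv{u}$ has $\ell_2$-norm exactly $\|\y-\X\param\|_2+\delta\|\param\|_1$. Thus $R^{\text{robust}}(\param;\mathcal C_2(\delta))=\|\y-\X\param\|_2+\delta\|\param\|_1$, which is the square-root-lasso objective of~\citep{xu_robust_2008} up to the overall positive scalar fixed by its definition, so the minimizers coincide.

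The steps that need care are not genuinely hard: (i) stating clearly that the maximum decouples over rows in the sample-wise case, which is immediate from the product structure of $\mathcal R_p(\delta)$; (ii) the degenerate case $\y=\X\param$ in the achievability arguments, where the residual direction is undefined but any unit $\vv{u}$ works; and (iii) tracking that $R^{\text{robust}}$ is an \emph{unsquared} $\ell_2$-norm, so the feature-wise identity produces the square-root-lasso objective $\|\y-\X\param\|_2+\delta\|\param\|_1$ rather than the ordinary lasso objective --- this distinction is the content of the proposition and must be kept precise. Beyond these bookkeeping points the argument is essentially two applications of "triangle/Hölder inequality plus an explicit worst-case perturbation," so I do not anticipate a substantive obstacle.
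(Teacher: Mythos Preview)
Your proposal is correct and matches the paper's approach. For the sample-wise identity you argue exactly as the paper does (decoupling over rows plus monotonicity of $z\mapsto z^2$ to pass between $R^{\text{robust}}$ and $R_p^{\text{adv}}$), only with the dual formula inserted as an intermediate step; for the feature-wise identity the paper simply cites \citep[Theorem~1]{xu_robust_2008}, while you supply the standard triangle-inequality-plus-rank-one-perturbation argument that is presumably what that reference contains, so your write-up is more self-contained but not mathematically different.
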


Square-root lasso is the estimator minimizing,
  \begin{equation*}
   R^{\sqrt{\text{lasso}}}(\param, \delta) = \|\y - \X\param\|_2 + \delta \|\param\|_1.
  \end{equation*}
It optimizes a Lagrangian formulation of the same constrained problem as lasso. They have different properties however and the square-root lasso is pivotal and attain near-oracle performance without knowledge of the variance levels~\citep{belloni_square-root_2011}. Similarly, $\ell_\infty$-adversarial training has different high-dimensional properties when compared to lasso, see~\cite{xing_generalization_2021} for an analysis of some of these properties.

The first statement about robust regression is  proved in~\citet[theorem 1]{xu_robust_2008}. The second part follows by noting that 
\begin{equation*}
\begin{aligned}
R_p^{\text{adv}}(\param; \delta)
  &= \frac{1}{n}\max_{\mm{\Delta} \in  \mathcal{R}_{p}(\delta)}  \|\y - (\X + \mm{\Delta} )\param\|^2_2\\
  &= \frac{1}{n}\left(\max_{\mm{\Delta} \in  \mathcal{R}_{p}(\delta)} \|\y - (\X + \mm{\Delta} )\param\|_2\right) ^2.
\end{aligned}
\end{equation*}
Where the first equality follows from the definition of adversarial training and the last equality from the fact that the function $h(z) = \tfrac{1}{n} z^2$ is monotonically increasing for $z \ge 0$. Repeating the same argument, but now for the minimization, implies that $R_p^{\text{adv}}(\param; \delta)$ has the same minimizer as $\max_{\mm{\Delta} \in  \mathcal{R}_{p}} \|y - (\X + \mm{\Delta})\param\|_2$.

\subsection{Example: diabetes progression}

Regularization paths are plots that show the value of the coefficient estimates as the regularization parameter~$\delta$ is varied. Say, for lasso, we would define the solution
\begin{equation}
    \label{eq:ridge_solution}
    \eparam^{\text{lasso}}(\delta) =  \text{arg} \min_{\param} R^{\text{lasso}} (\param; \delta)
\end{equation}
i.e., the optimal parameter as a function of $\delta$. Such a function is visualized in a plot like the one in  Fig.~\ref{fig:diabetes_model}, with each one of the coefficients of $\eparam$ being displayed as a function of $\delta$. Let us now study the regularization paths for $\ell_2$ and $\ell_\infty$-adversarial training, ridge regression and lasso regression.

In Fig.~\ref{fig:diabetes_model} we show the so-called regularization paths for the above methods in the Diabetes dataset \citep{efron_least_2004}.  The dataset has $m=10$ baseline variables (age, sex, body mass index, average blood pressure, and six blood serum measurements), which were obtained for $n = 442$ diabetes patients. The model output is a quantitative measure of the disease progression one year after the baseline variables were measured. The dataset was introduced in the context of the study of lasso solutions.

A visual inspection of the regularization paths in panel (a) and (b) illustrate the similarities between  lasso and $\ell_\infty$-adversarial training. It also illustrates the fact that \emph{$\ell_\infty$-adversarial training yields sparse solutions}. 
Moreover, comparing panel (c) and (d) highlights the similarities between $\ell_2$ adversarial attacks (i.e. $q = 2$) and ridge  regression, which minimizes
\[R^{\text{ridge}}(\param; \delta) =  \frac{1}{n}\sum_{i=1}^n|y_i - \x_i^\trnsp\beta|^2 +  \delta\|\param\|_2^2.\]
One interesting difference is that while ridge regression coefficients slowly decay to zero as the amount of regularization is  increased, the transition is more abrupt in the case of $\ell_2$-adversarial training. At $\delta \approx 10^{-0.5}$, we see all coefficients collapsing to zero. 

We compute the regularization path of ridge regression and adversarial training by solving the optimization problem for a grid of $200$ values of $\delta$ equally spaced in the logarithmic space.  For lasso, we use the coordinate descent algorithm proposed by~\cite{friedman_regularization_2010}.  In this and all our numerical examples, the adversarial training solution is implemented by minimizing~\eqref{eq:advtraining-closeform} using CVXPY~\citep{diamond_cvxpy_2016}. 
The result is displayed for the different methods and, as $\delta \rightarrow \infty$, the coefficients should approach zero for all methods. On the other hand, as  $\delta \rightarrow 0$, the coefficients should approach the least-squares solution.

\section{Overparameterized models}
\label{sec:overparameterized-models}

When there are more parameters than data-point ($m > n$) there are multiple solutions to $\X \param = \y$.  Among these solutions, we highlight the solutions with the smallest $\ell_2$ and $\ell_1$ norm, respectively. That is, the minimum $\ell_2$-norm interpolator,
\[\eparam^{\text{min}-\ell_2} = \text{arg}\min_{\param}  \|\param\|_2 \mathrm{~~~subject~to~~~} \X \param = \y,\]
and the minimum $\ell_1$-norm interpolator,
\[\eparam^{\text{min}-\ell_1} = \text{arg}\min_{\param}  \|\param\|_1 \mathrm{~~~subject~to~~~} \X \param = \y.\]
These solutions are interesting for several reasons: The minimum $\ell_1$-norm interpolator is well studied in the context of `basis pursuit' and allows the recovery of low dimensional representations of sparse signals~\cite{chen_atomic_1998}. The interest in the minimum $\ell_2$ norm is more recent: such a solution is used in many recent papers where the double descent~\cite{belkin_reconciling_2019, hastie_surprises_2019}  and the benign overfitting phenomena~\cite{bartlett_benign_2020} are observed. Such studies shed light on the generalization of state-of-the-art models such as deep neural networks that (almost) perfectly fit the training data and still generalize well. Many of the empirical observations for state-of-the-art models can indeed be illustrated in simpler settings such as linear regression, see e.g.~\citet{hastie_surprises_2019,bartlett_benign_2020}. We point out that the minimum $\ell_1$-norm interpolator has also been studied in connection to the  `benign-overfitting' phenomena in~\cite{wang_tight_2022}.

Not only are such solutions interesting from an application point of view, there is also a connection between ridge regression and the minimum $\ell_2$-norm interpolator, namely the solution of ridge regression converges to the minimum norm solution as the parameter vanishes.
$\eparam^{\text{ridge}}(\delta) \rightarrow \eparam^{\text{min}-\ell_2}$ as 
$\delta \rightarrow 0^+$.
Similarly, there is a relation between the minimum $\ell_1$ norm solution and lasso.
The relation requires additional constraints because, for the overparameterized case, lasso does not necessarily has a unique solution. Nonetheless, it is proved in \cite[Lemma 7]{tibshirani_lasso_2013} that the lasso solution by the LARS algorithm satisfy $\eparam^{\text{lasso}}(\delta) \rightarrow \eparam^{\text{min}-\ell_1}$ as  $\delta \rightarrow 0^+$.

Interestingly, there is also a connection between adversarial training and such solutions. As we stated in Section~\ref{sec:contributions}, for a sufficiently small $\delta$, the solution of $\ell_2$ adversarial training equals the minimum $\ell_2$ norm solution; and, the solution of $\ell_\infty$-adversarial training equals the minimum $\ell_1$ norm solution. There is a notable difference though, while this happens only on the limit for ridge regression and lasso, for adversarial training this happens for all values $\delta$ smaller than a threshold, which we denote $\bar{\delta}$. Theorem~\ref{thm:min-norm-sols} (in the introduction) establishes this connection and follows from the propositions  below. 

 \begin{proposition}
 \label{sec:minimizer-lp-attacks}
For $0<\delta< \gamma_{\min{}}(\X)$, a  minima of $R_p^{\text{adv}}(\param; \delta)$ lies in the set $\{\param \in \R^m | \X \param = \y\}$, where  ${\gamma_{\min{}}(\bullet)}$ denotes the smallest (in magnitude) nonzero entry of a matrix, i.e., $\gamma_{\min{}}(\mm{X})  = \min_{i, j, |x_{i, j}|> 0} |x_{i, j}|.$
\end{proposition}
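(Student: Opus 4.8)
The plan is to show that the minimum-$\ell_q$-norm interpolator is itself a global minimizer of $R_p^{\text{adv}}(\cdot;\delta)$ once $\delta$ is small, which directly exhibits a minimum lying in $\{\param\in\R^m \mid \X\param=\y\}$. Here $q$ denotes the conjugate exponent of $p$, so the relevant interpolator is $\eparam^{\text{min-}\ell_1}$ when $p=\infty$ and $\eparam^{\text{min-}\ell_2}$ when $p=2$; write it $\param^\circ$. By the dual-formulation lemma $R_p^{\text{adv}}(\cdot;\delta)$ is convex, and since $R_p^{\text{adv}}(\param;\delta)\ge\delta^2\|\param\|_q^2$ it is coercive, so a minimizer exists. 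On the affine set $\X\param=\y$ every residual $y_i-\x_i^\trnsp\param$ vanishes, so there $R_p^{\text{adv}}(\param;\delta)=\delta^2\|\param\|_q^2$, which over that set is minimized at $\param^\circ$. Hence it suffices to verify the subgradient optimality condition $0\in\partial R_p^{\text{adv}}(\param^\circ;\delta)$; the degenerate case $\y=0$ forces $\param^\circ=0$, the unique minimizer, so assume $\param^\circ\neq0$.

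First I would compute $\partial R_p^{\text{adv}}(\param^\circ;\delta)$. Writing $R_p^{\text{adv}}(\param;\delta)=\tfrac1n\sum_{i=1}^n g_i(\param)^2$ with $g_i(\param)=|y_i-\x_i^\trnsp\param|+\delta\|\param\|_q\ge0$, the residual at $\param^\circ$ is zero so $g_i(\param^\circ)=\delta\|\param^\circ\|_q>0$; the convex chain rule then gives $\partial(g_i^2)(\param^\circ)=2g_i(\param^\circ)\,\partial g_i(\param^\circ)$, with $\partial g_i(\param^\circ)=\{c\,\x_i:|c|\le1\}+\delta\,\partial\|\param^\circ\|_q$, the first summand being the subdifferential of $\param\mapsto|y_i-\x_i^\trnsp\param|$ at a zero of its argument. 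Summing over $i$ (the sum rule for subdifferentials applies, all terms being finite convex functions on $\R^m$) and collecting the coefficients of the $\x_i$ into a vector $\vv c\in\R^n$, the optimality condition $0\in\partial R_p^{\text{adv}}(\param^\circ;\delta)$ becomes
\[
\exists\ \vv c\in\R^n \text{ with } \|\vv c\|_\infty\le1, \quad \exists\ \vv s\in\partial\|\param^\circ\|_q, \quad \text{such that} \quad \X^\trnsp\vv c+n\delta\,\vv s=0 .
\]

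The remaining step produces such a pair for small $\delta$. Because $\param^\circ$ is by construction the minimum-$\ell_q$-norm point of $\{\X\param=\y\}$, its own Lagrange/KKT conditions furnish $\vv\mu\in\R^n$ with $\X^\trnsp\vv\mu\in\partial\|\param^\circ\|_q$. Taking $\vv s=\X^\trnsp\vv\mu$ and $\vv c=-n\delta\,\vv\mu$ then makes $\X^\trnsp\vv c+n\delta\vv s=0$ hold identically, leaving only the constraint $\|\vv c\|_\infty=n\delta\|\vv\mu\|_\infty\le1$. So $\param^\circ$ is a global minimizer --- hence a minimum lying in the interpolation set --- for every $0<\delta\le(n\|\vv\mu\|_\infty)^{-1}$, an explicit positive threshold. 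Equivalently, one can run this as a descent argument: if some minimizer $\param^{*}$ had nonzero residual $\vv r=\y-\X\param^{*}$, then since $\text{rank}(\X)=n$ one may choose $\vv d$ with $\X\vv d=\vv r$, so along $\param^{*}+t\vv d$ the residuals scale as $(1-t)\vv r$ and a one-sided derivative computation shows $R_p^{\text{adv}}$ strictly decreases at $t=0^{+}$ once $\delta$ is small enough, contradicting minimality.

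The main obstacle is twofold. First, $R_p^{\text{adv}}(\cdot;\delta)$ is non-smooth exactly on the set of interest --- every term $|y_i-\x_i^\trnsp\param|$ has a kink on $\{\X\param=\y\}$ --- so the argument must run through subdifferentials, checking that the convex chain rule applies (it does, since $g_i(\param^\circ)>0$) and that the kinks of all $n$ residual terms are active simultaneously at $\param^\circ$. Second, upgrading the abstract threshold $(n\|\vv\mu\|_\infty)^{-1}$ to the explicit value $\gamma_{\min}(\X)$ stated in the proposition requires a bound on the multiplier $\vv\mu$ --- equivalently, a bound on how cheaply, in $\ell_q$-norm, a prescribed change of residual can be realized through $\X$ --- and this is the only genuinely computational part, where the smallest nonzero entry of $\X$ enters; the resulting bound is sufficient but need not be sharp, which is consistent with the (larger, unspecified) threshold $\bar\delta$ of Theorem~\ref{thm:min-norm-sols}.
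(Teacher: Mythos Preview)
Your route differs from the paper's. The paper works term by term: writing $g_i(\param)=|y_i-\x_i^\trnsp\param|+\delta\|\param\|_q$, it computes $\partial g_i(\param)=\x_i\,\partial|y_i-\x_i^\trnsp\param|+\delta\,\partial\|\param\|_q$ and, using that every $\vv g\in\partial\|\param\|_q$ satisfies $\|\vv g\|_\infty\le\|\vv g\|_p\le1$, argues that when $\delta<|x_{i,j}|$ for every nonzero entry $x_{i,j}$ one has $\vv 0\in\partial g_i(\param)$ if and only if $y_i=\x_i^\trnsp\param$; summing over $i$ via the chain rule then places $\vv 0$ in $\partial R_p^{\text{adv}}$ on the interpolation set. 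The constant $\gamma_{\min}(\X)$ drops out directly from this entrywise comparison between $\delta$ and the entries of $\X$, with no reference to $\y$ or to any dual variable.

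Your KKT-based certificate is different in kind: you single out the min-$\ell_q$-norm interpolator $\param^\circ$ and use its Lagrange multiplier $\vv\mu$ (satisfying $\X^\trnsp\vv\mu\in\partial\|\param^\circ\|_q$) to manufacture the pair $\vv c=-n\delta\vv\mu$, $\vv s=\X^\trnsp\vv\mu$ in the aggregated optimality condition $\X^\trnsp\vv c+n\delta\vv s=0$. This is clean and correctly certifies optimality for $\delta\le(n\|\vv\mu\|_\infty)^{-1}$, but that threshold depends on $\y$ through $\vv\mu$. You acknowledge that recovering the $\y$-free constant $\gamma_{\min}(\X)$ ``requires a bound on the multiplier'' and label it the only computational part---but you do not carry it out, and it is not immediate: knowing only $\|\X^\trnsp\vv\mu\|_p\le1$ does not readily give $n\|\vv\mu\|_\infty\le\gamma_{\min}(\X)^{-1}$. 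So as written your proposal proves the qualitative statement that some $\bar\delta>0$ works (which is all Theorem~\ref{thm:min-norm-sols} needs), but leaves the explicit constant in Proposition~\ref{sec:minimizer-lp-attacks} unestablished. If you want the stated $\gamma_{\min}(\X)$, you will need to move closer to the paper's entrywise argument rather than going through the multiplier.
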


\begin{proof}[Proof of Theorem 2]
Full row rank guarantees that the set $\{\param \in \R^m | \X \param = \y\}$  is not empty.
Along this set, $R_p^{\text{adv}}(\param; \delta) = \delta^2\|\param\|_q^2$ by the dual formula~\eqref{eq:advtraining-closeform}.
Hence, it follows from the proposition that for $0< \delta < \gamma_{\min{}}(\X)$,
$$\min_{\beta} R_p^{\text{adv}}(\param; \delta)  = \min_{\X \param = \y} R_p^{\text{adv}}(\param; \delta) = \min_{\X \param = \y} \|\param\|_q 
\vspace{-5pt}$$
which is minimized by  $\eparam^{\min-{\ell_1}}$ and $\eparam^{\min-{\ell_2}}$ for, respectively, for $q=1$ and $q=2$.
\end{proof}

\subsection{Examples}
\label{sec:examples}

Before proving the proposition we give examples of its consequence. Unlike ridge regression and lasso that converges towards the interpolation solution, adversarial training goes through abrupt transitions and suddenly starts to interpolate the data.

\paragraph{Isotropic data model}
Figure~\ref{fig:distance-min-norm-sol} illustrates this phenomena  in synthetically generated data. In this case, we consider Gaussian noise and covariates: $\epsilon_i \sim \N(0, \sigma^2)$ and $\x_i \sim \N(0, r^2 \mm{I}_m)$  and the output is computed as a linear combination of the features contaminated with  additive noise: $y_i = \x_i^\trnsp \param+ \epsilon_i$.
We fix $m = 200$ and $n = 60$ and in Figure~\ref{fig:distance-min-norm-sol} we show what happens as we vary $\delta$ for models estimated using lasso, ridge and $\ell_\infty$ and $\ell_2$-adversarial training. In the Supplementary Material Section~\ref{sec:latent-feature} we consider a different synthetic set that uses features that are noisy observations of a lower-dimensional subspace, similar to~\citep[Section 5.4]{hastie_surprises_2019}.  The same abrupt  transition into interpolation can be observed.

\begin{figure}[ht]
    \centering
    \includegraphics[width=0.45\textwidth]{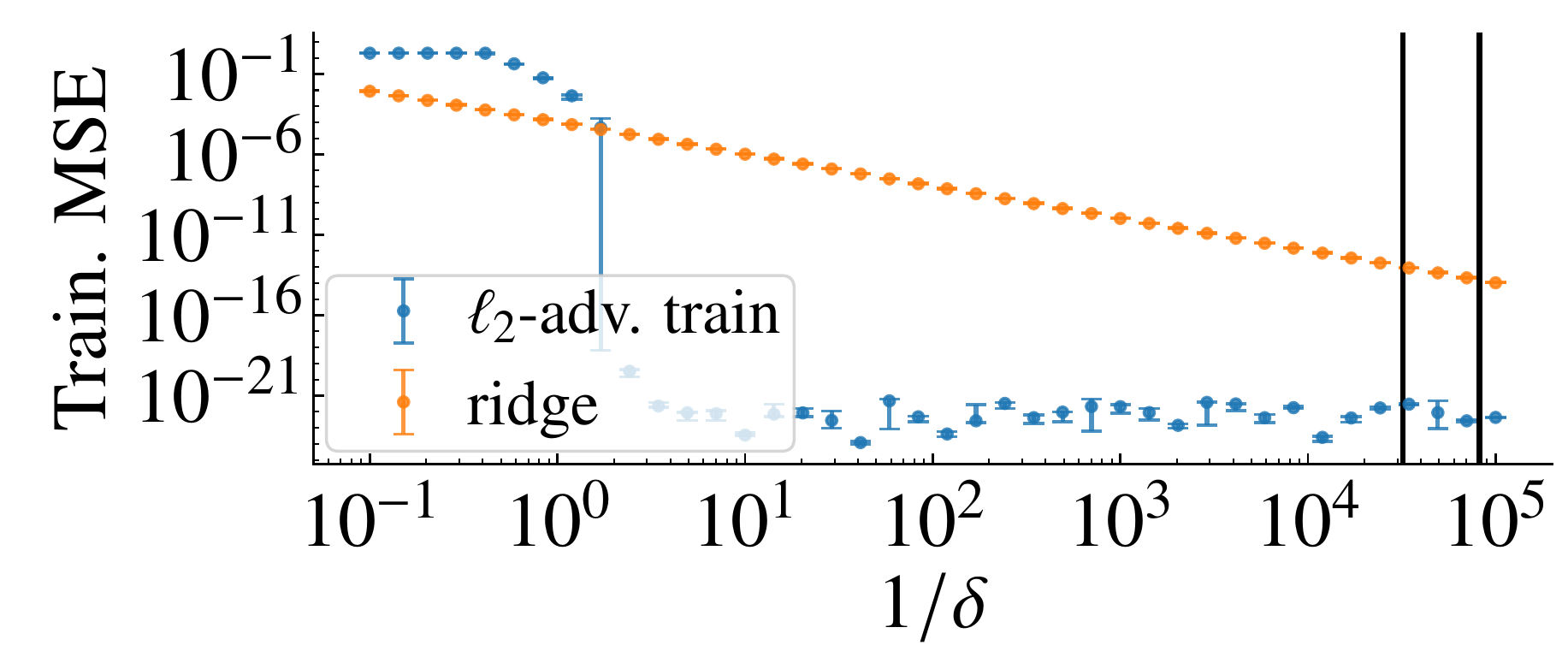}\\
    \includegraphics[width=0.45\textwidth]{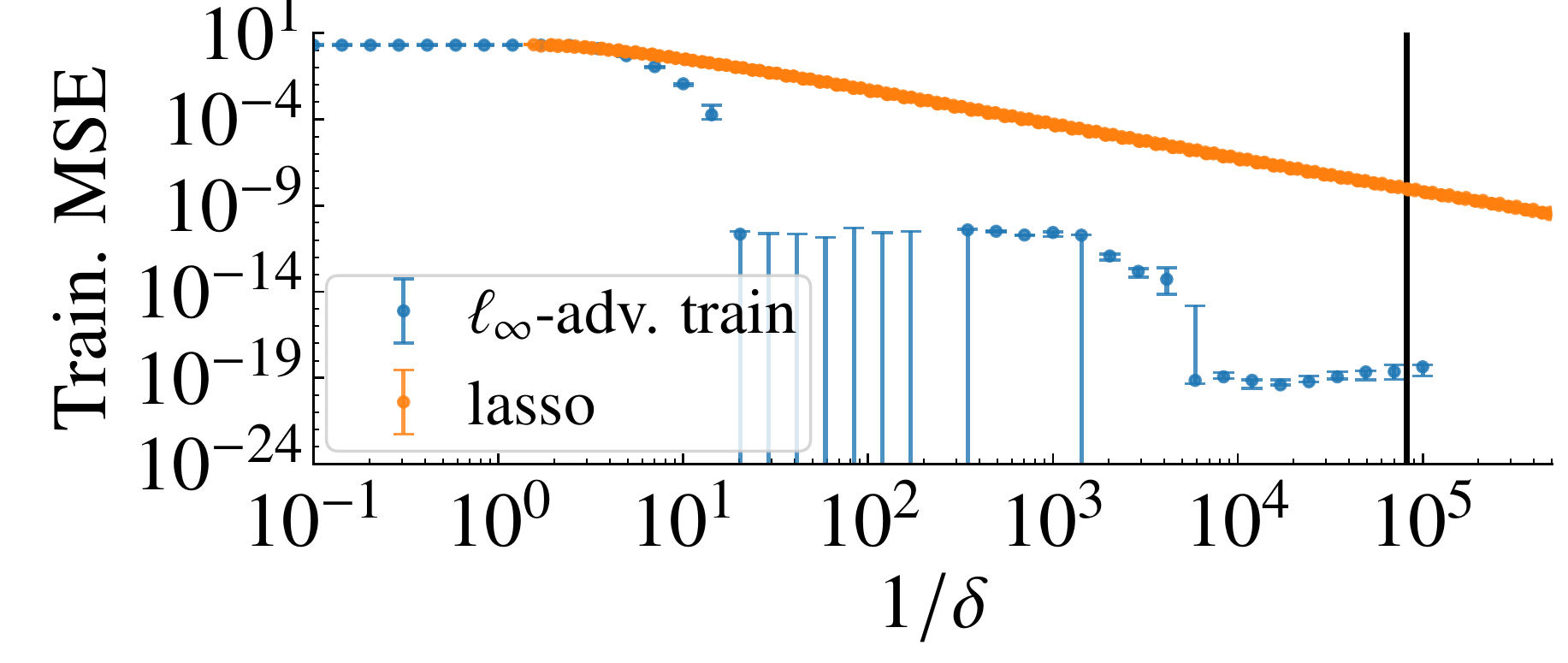}
    \caption{\textbf{Training MSE \textit{vs} regularization parameter}. \emph{Top:} for ridge and $\ell_2$-adversarial training. \emph{Bottom:}  for lasso and $\ell_\infty$-adversarial training The error bars give the median and the 0.25 and 0.75 quantiles obtained from numerical experiment (5 realizations). Vertical black lines give $\gamma_{\min{}} (\X)$ and, on the top figure,  $\gamma_{\min} (\mm{Q}\X)$ obtained from Propositions~\ref{sec:minimizer-lp-attacks} and~\ref{sec:minimizer-l2-attacks}.}
    \label{fig:distance-min-norm-sol}
\end{figure}

\paragraph{Phenotype prediction from genotype}

\begin{figure}[ht]
\begin{center}
  \includegraphics[width=0.45\textwidth]{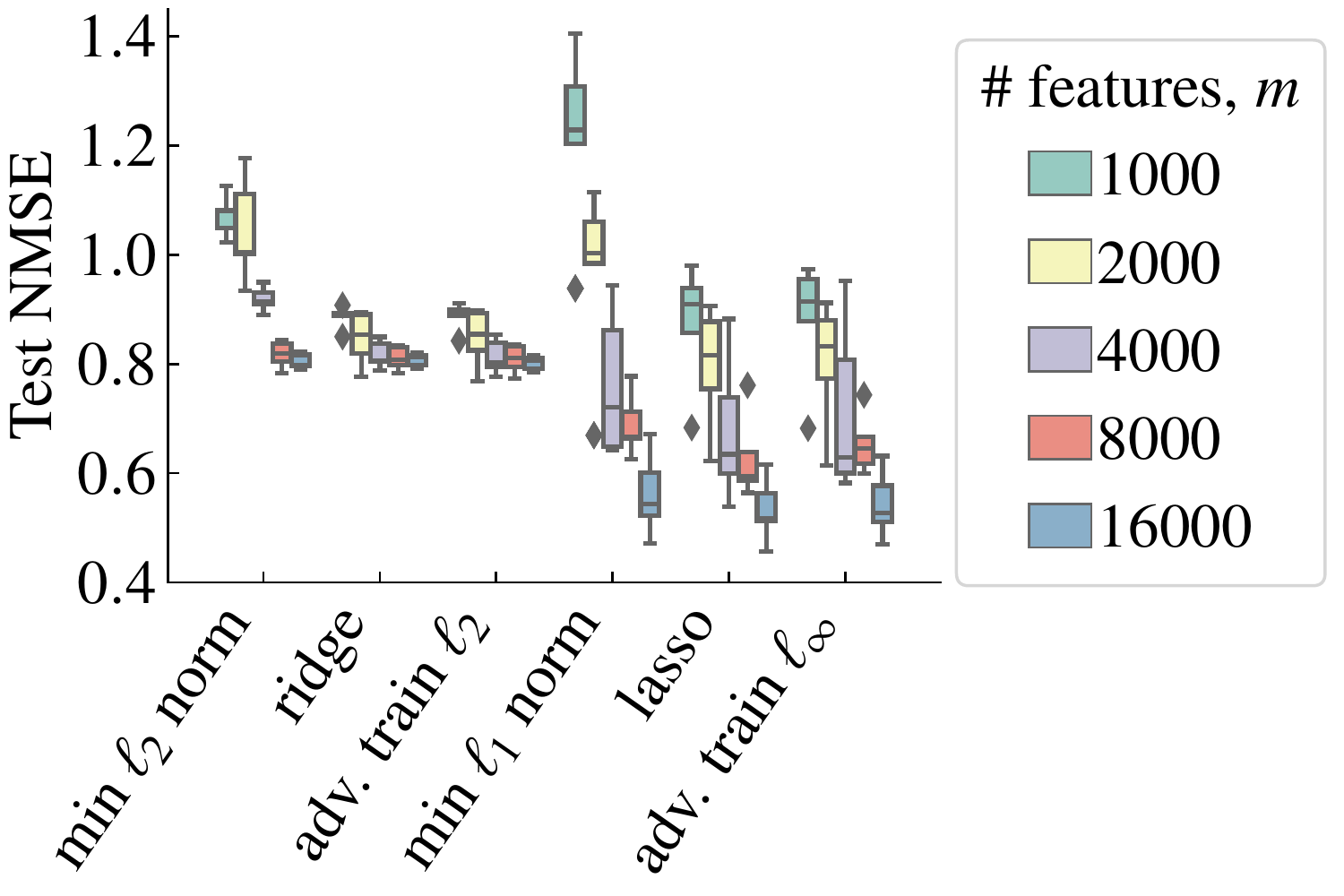}
    \includegraphics[width=0.45\textwidth]{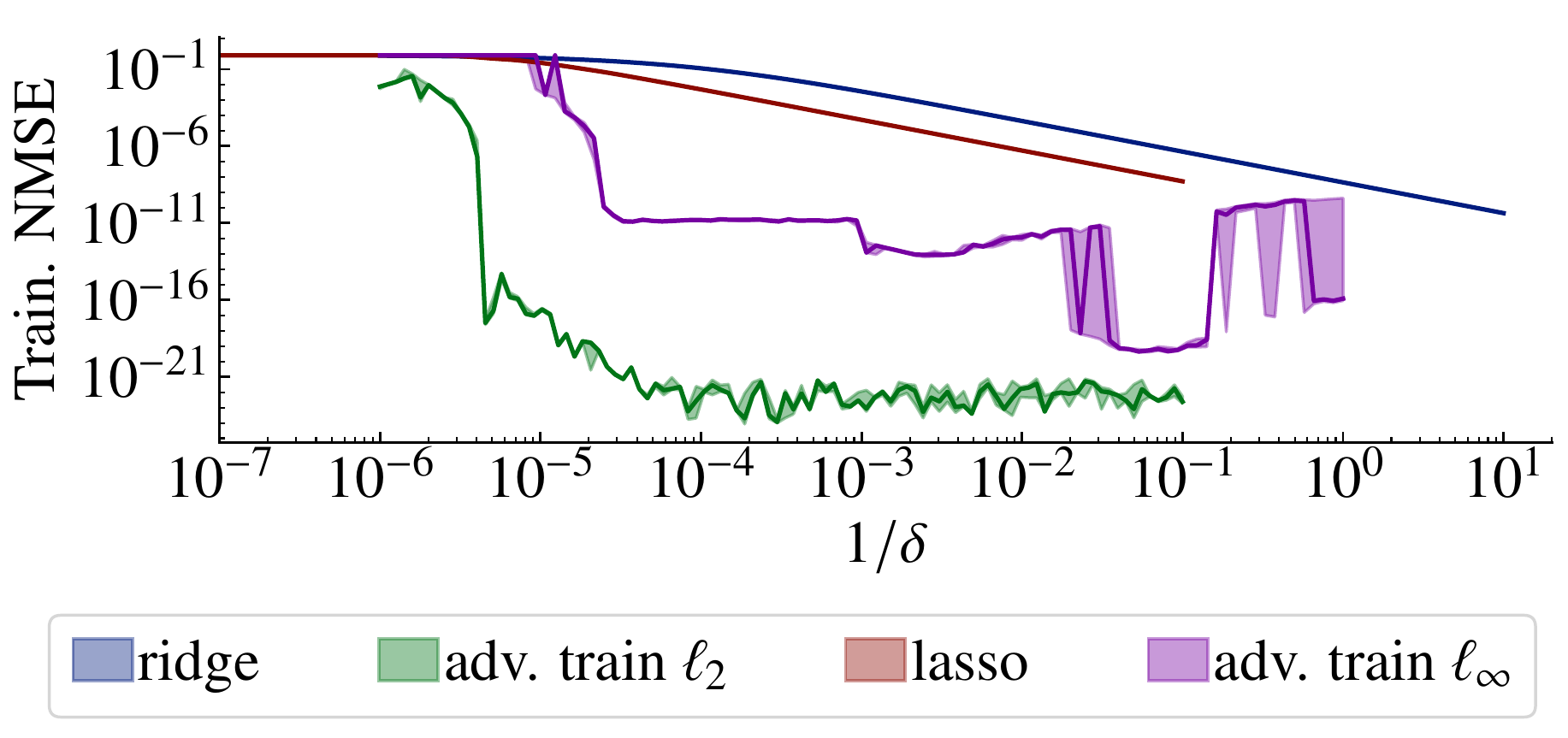}
    \end{center}
    \caption{\textbf{The normalized MSE (NMSE) on MAGIC dataset.} \emph{Top:} on the test set. With the best available regularization parameter $\delta$ obtained via grid search. \emph{Bottom:} on training as a function of $1/\delta$ for $m =16000$. We repeat the same experiment $5$ times with independently sampled features to obtain the median and inter-quartile range represented in the box plot on the top, and by the full line and colored region on the bottom.}
    \label{fig:magic}
  \end{figure}

We illustrate our method on the  Diverse MAGIC wheat dataset~\citep{scott_limited_2021} from the National Institute for Applied Botany. The dataset contains the whole genome sequence data and multiple phenotypes for a population of 504 wheat lines.  Here, we use a subset of the genotype to predict one of the continuous phenotypes. We use half of the individuals ($n=252$) for training and the remaining $252$ samples for testing. We have a binary input with values indicating whether each one of the 1.1 million nucleotides differs or not from the reference value. Closely located nucleotides tend to be correlated and we use only a pruned version of the genome as input to the model. We consider $m\in\{1000, 2000, 4000, 8000, 16000\}$ features, where $m$ is a (uniformly sampled) subset of the genome. 

Figure~\ref{fig:magic} (\emph{top}) gives the test error of each of the methods for different choices of $m$. For lasso, ridge and adversarial training, we use the best $\delta$ available for each method (obtained via grid search). We note that while for $m=1000$ optimally tuned lasso and $\ell_\infty$-adversarial training significantly outperform the corresponding minimum $\ell_1$-norm interpolator. As $m$ increases, the performance of the three different methods becomes quite similar. The same  apply to ridge, $\ell_2$-adversarial training and the minimum $\ell_2$-norm interpolator.

Interestingly, for $m = 1000$, the minimum $\ell_2$-norm has the best performance, but as $m$ increases, the minimum $\ell_1$-norm interpolator outperforms it. This is an interesting  natural example where the $\eparam^{\text{min}-\ell_1}$ can outperform the $\eparam^{\text{min}-\ell_2}$: the first seems to be better at incorporating more features without worsening the generalization capability of the model and it is an interesting counterpoint to~\citet{chatterji_foolish_2022}. Figure~\ref{fig:magic} (\emph{bottom}) shows training error as a function of the regularization parameter $\delta$ (for $m = 16000$).  While the training error decays slowly as we vary the amount of regularization $\delta$ for lasso  and ridge regression. For adversarial training,   we see again an abrupt drop in the training error. And after this abrupt drop the model starts to interpolate the training dataset (up to numerical precision).

\subsection{The dependency of $\bar \delta$ on $m$}

\begin{figure*}
    \centering
    \subfloat[ridge regression]{\includegraphics[width=0.4\textwidth]{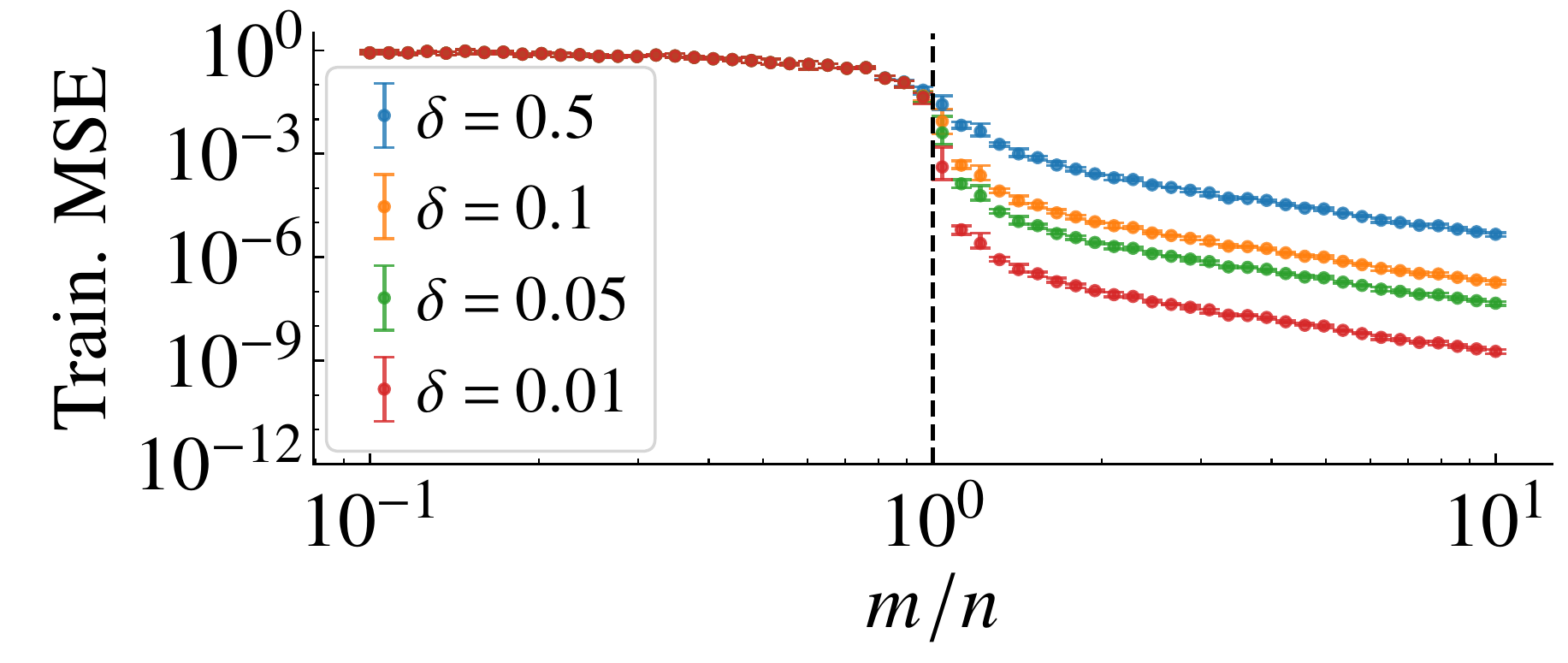}}
    \subfloat[$\ell_2$-adversarial training  ]{\includegraphics[width=0.4\textwidth]{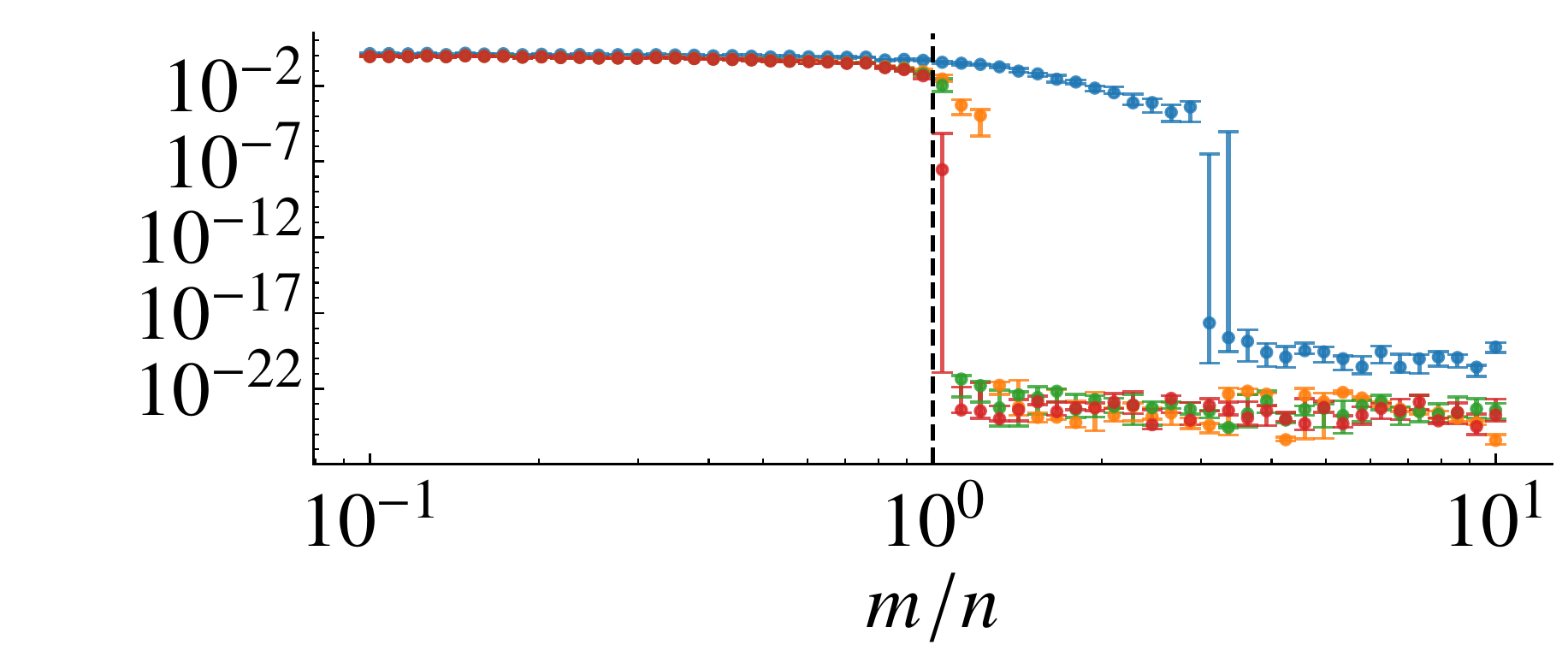}}\\
    \subfloat[lasso regression]{\includegraphics[width=0.4\textwidth]{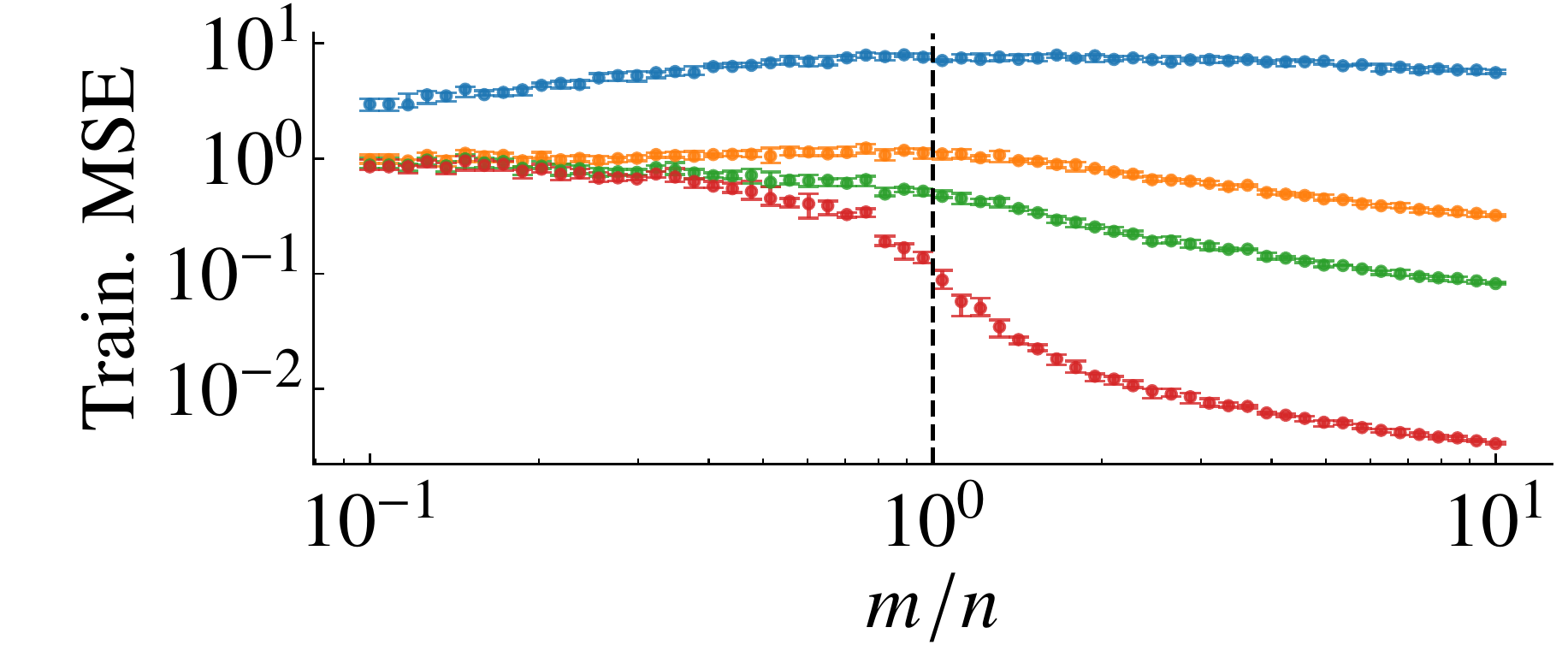}}
    \subfloat[$\ell_\infty$-adversarial training   ]{\includegraphics[width=0.4\textwidth]{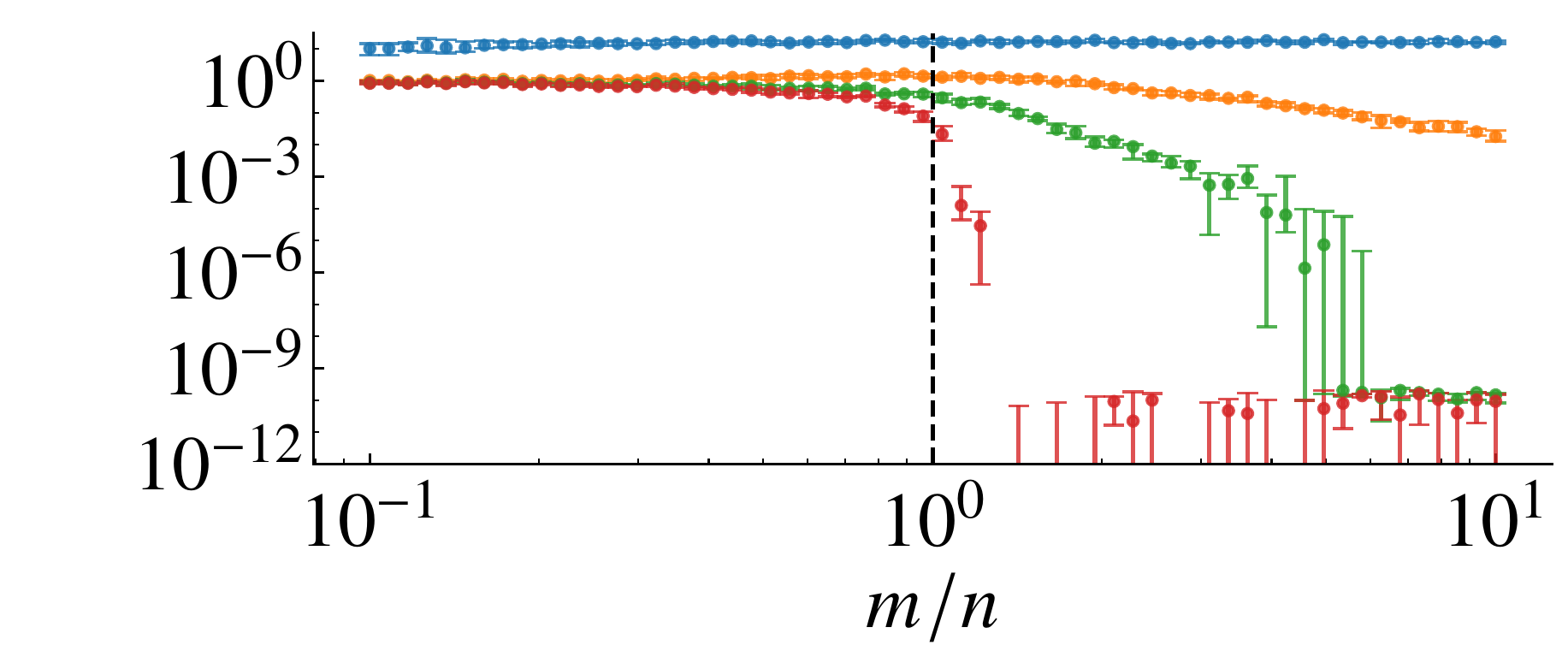}}\\
    \caption{\textbf{Mean square error in training data}. The error bars give the median and the 0.25 and 0.75 quantiles obtained from numerical experiment (10 realizations) and for regularization parameter $\delta = \{0.5,0.1,0.05,0.01\}$ with the colors corresponding to each case indicated in the legend. We use $r^2 = 4$, $\sigma^2 = 1$ and $n = 100$.}
    \label{fig:training-error-advtraining}
\end{figure*}

We observed in our experiment that there exists a threshold $\bar \delta$, such that for $0 < \delta < \bar \delta$  the solution to adversarial training is the minimum-norm solution that interpolates the data. 
In this sense, Theorem~\ref{thm:min-norm-sols} cast new light into minimum-norm interpolators: they are solutions minimizing the empirical adversarial risk for a disturbance $\bar \delta > 0$.  Hence, studying how $\bar \delta$ changes with $\X$ provides insight into the robustness of such solutions.

Note that Proposition~\ref{sec:minimizer-lp-attacks} only provides a sufficient condition and that there could be less conservative bounds on $\bar \delta$. Indeed, we observe empirically that often $\bar \delta > \gamma_{\min{}}(\X)$, e.g., the vertical black lines in  Figure~\ref{fig:distance-min-norm-sol}. The next proposition gives a tighter bound that holds for $\ell_2$-adversarial attacks.

\begin{proposition}
 \label{sec:minimizer-l2-attacks}
For $0<\delta<\gamma_{\min{}}(\X \mm{Q}^\trnsp)$, $\mm{Q}\in \R^{n \times m}$ any matrix with orthogonal rows that span the rows of $\mm{X}$, a  minima of $R_2^{\text{adv}}(\param; \delta)$ lies in the set $\{\param \in \R^m | \X \param = \y\}$.
\end{proposition}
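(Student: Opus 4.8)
The plan is to reduce the statement to the already-proved Proposition~\ref{sec:minimizer-lp-attacks} by means of an orthogonal change of coordinates that ``compresses'' the rows of $\X$ into an $n\times n$ block. Since $\mm{Q}$ has orthonormal rows (so $\mm{Q}\mm{Q}^\trnsp = \mm{I}_n$) whose span is the row space of $\X$, and both matrices have rank $n$, we may write $\X = \mm{A}\mm{Q}$ with $\mm{A} = \X\mm{Q}^\trnsp \in \R^{n\times n}$ invertible. The matrix $\mm{A}$ encodes the same information as $\X$, but it packs the Euclidean norm of each row of $\X$ into only $n$ coordinates rather than $m$; this is the heuristic reason why $\gamma_{\min}(\mm{A}) = \gamma_{\min}(\X\mm{Q}^\trnsp)$ is typically larger than $\gamma_{\min}(\X)$.

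First I would split the parameter along $\mathrm{row}(\X)$ and its orthogonal complement $\ker\X$: every $\param \in \R^m$ is uniquely $\param = \mm{Q}^\trnsp\vv{u} + \vv{v}$ with $\vv{u} \in \R^n$ and $\mm{Q}\vv{v} = 0$. As $\mm{Q}^\trnsp$ is an isometry onto $\mathrm{row}(\X)$ and $\mm{Q}^\trnsp\vv{u} \perp \vv{v}$, one has $\|\param\|_2^2 = \|\vv{u}\|_2^2 + \|\vv{v}\|_2^2$, while $\X\param = \mm{A}\vv{u}$. Substituting into the dual formula~\eqref{eq:advtraining-closeform} with $q = 2$ gives
\[
R_2^{\text{adv}}(\param;\delta) = \frac{1}{n}\sum_{i=1}^n\Bigl(|y_i - \vv{a}_i^\trnsp\vv{u}| + \delta\sqrt{\|\vv{u}\|_2^2 + \|\vv{v}\|_2^2}\Bigr)^2,
\]
where $\vv{a}_i^\trnsp$ is the $i$-th row of $\mm{A}$; since $\sqrt{\|\vv{u}\|_2^2 + \|\vv{v}\|_2^2} \ge \|\vv{u}\|_2$ with equality iff $\vv{v} = 0$, the right-hand side is at least $\tfrac{1}{n}\sum_{i=1}^n(|y_i - \vv{a}_i^\trnsp\vv{u}| + \delta\|\vv{u}\|_2)^2$, which by~\eqref{eq:advtraining-closeform} is exactly the $\ell_2$-adversarial risk built from the design matrix $\mm{A}$. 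Because $R_2^{\text{adv}}(\param;\delta) \ge \delta^2\|\param\|_2^2$ is coercive and continuous a minimizer exists, and by the above every minimizer must have $\vv{v} = 0$ (the inequality is strict otherwise). Hence minimizing $R_2^{\text{adv}}(\,\cdot\,;\delta)$ over $\R^m$ is equivalent to minimizing the $\ell_2$-adversarial risk associated with $\mm{A}$ over $\R^n$.

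It then remains to invoke Proposition~\ref{sec:minimizer-lp-attacks} with $p = 2$ applied to $\mm{A}$: being square and invertible, $\mm{A}$ has full row rank, so for $0 < \delta < \gamma_{\min}(\mm{A})$ a minimizer $\vv{u}^\star$ of its $\ell_2$-adversarial risk satisfies $\mm{A}\vv{u}^\star = \y$, i.e.\ $\vv{u}^\star = \mm{A}^{-1}\y$. Lifting back, $\param^\star = \mm{Q}^\trnsp\mm{A}^{-1}\y$ attains the common minimum value, hence minimizes $R_2^{\text{adv}}(\,\cdot\,;\delta)$, and $\X\param^\star = \mm{A}\mm{Q}\mm{Q}^\trnsp\mm{A}^{-1}\y = \y$, so $\param^\star$ lies in $\{\param : \X\param = \y\}$. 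Since $\gamma_{\min}(\mm{A}) = \gamma_{\min}(\X\mm{Q}^\trnsp)$, this is exactly the claimed range of $\delta$.

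I do not expect a real obstacle here: essentially all the content already lives in Proposition~\ref{sec:minimizer-lp-attacks}, which handles the non-smooth $\ell_p$ machinery, and the only new ingredient is the change of coordinates. The subtlest point is that ``orthogonal rows'' must be read as \emph{orthonormal} rows ($\mm{Q}\mm{Q}^\trnsp = \mm{I}_n$) — this is what makes $\vv{u} \mapsto \mm{Q}^\trnsp\vv{u}$ norm-preserving and identifies $\X\mm{Q}^\trnsp$ with the compressed design $\mm{A}$ in $\X = \mm{A}\mm{Q}$; the remaining care is purely in the bookkeeping of the equality case $\vv{v} = 0$ and in checking that the lifted point $\param^\star$ is a \emph{global} minimizer, which holds because the infima of the original and reduced problems coincide.
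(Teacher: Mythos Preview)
Your argument is correct. The decomposition $\param = \mm{Q}^\trnsp\vv{u} + \vv{v}$, the identity $\X\param = \mm{A}\vv{u}$ with $\mm{A} = \X\mm{Q}^\trnsp$, the norm splitting $\|\param\|_2^2 = \|\vv{u}\|_2^2 + \|\vv{v}\|_2^2$, and the lift back all go through exactly as you describe, and the invocation of Proposition~\ref{sec:minimizer-lp-attacks} with design matrix $\mm{A}$ closes the argument. Your reading of ``orthogonal rows'' as orthonormal is the intended one (the paper's own proof uses ``unitary mutually orthogonal vectors'').

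The route, however, differs from the paper's. The paper does \emph{not} reduce to Proposition~\ref{sec:minimizer-lp-attacks}; it performs the same orthogonal change of coordinates but then carries out a direct subdifferential computation on $f_i(\mm{Q}^\trnsp\vv{\alpha}) = |y_i - \vv{r}_i^\trnsp\vv{\alpha}| + \delta\|\vv{\alpha}\|_2$, checking that $\vv{0} \in \partial f_i$ forces $y_i = \vv{r}_i^\trnsp\vv{\alpha}$ whenever $\delta$ is below every nonzero $|r_{i,j}|$, and then combines these via the chain rule for $R_2^{\text{adv}}$. In effect the paper reproves the content of Proposition~\ref{sec:minimizer-lp-attacks} in the rotated coordinates rather than calling it. Your approach is more modular and makes transparent \emph{why} the sharper threshold is specific to $p = 2$: the reduction hinges on $\|\mm{Q}^\trnsp\vv{u}\|_2 = \|\vv{u}\|_2$, i.e.\ on the rotation-invariance of the Euclidean norm, which fails for other $\ell_q$. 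The paper's approach, in exchange, is self-contained and comes packaged with the $n=1$ picture (Fig.~\ref{fig:fi_plot}) that explains the abrupt transition geometrically.
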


Figure~\ref{fig:threshold from propositions} illustrates the bounds on the threshold obtained from both propositions for the same isotropic data. We obtain $\mm{Q}^\trnsp$ by the reduced $QR$ decomposition of $\mm{X}^\trnsp$ (as hinted by the notation). The bound $\gamma_{\min{}}(\X)$ decreases with $m$, while the bound $\gamma_{\min{}}(\X \mm{Q}^\trnsp)$ remains constant with it.  

However, we still believe that these bounds could be strengthened in future work.  The reason is shown in Figure~\ref{fig:training-error-advtraining}. There we keep $\delta$ constant and vary the number of features for the isotropic data model described in Section~\ref{sec:examples}. Adversarial training produces abrupt transitions in behavior by increasing $m$ with $\delta$ constant: giving us reason to believe $\bar \delta$ increases with $m$ in order to explain this observation.

\begin{figure}
    \centering
    \includegraphics[width=0.45\textwidth]{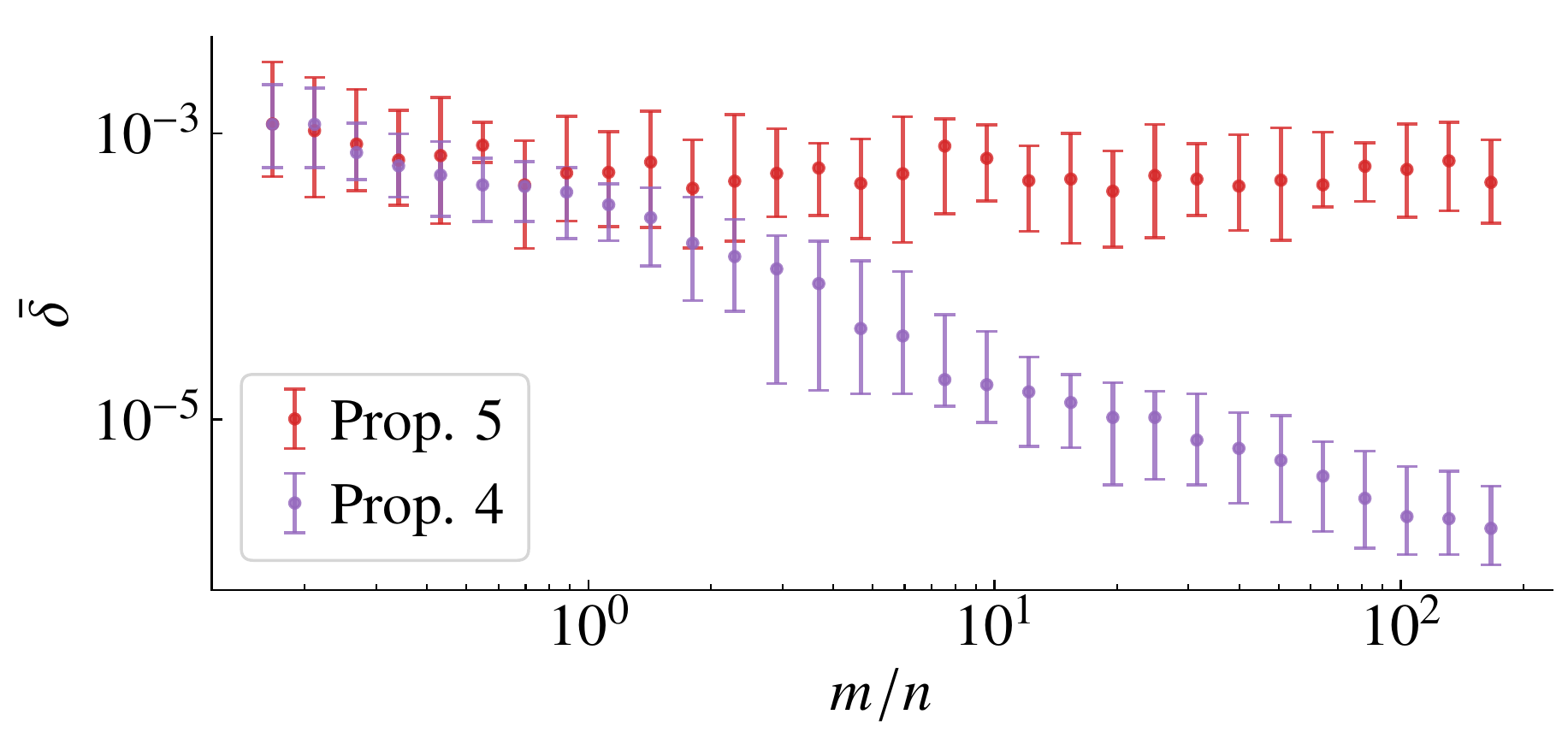}
    \caption{\textbf{Threshold \textit{vs} number of features.} Here the features $\X$ are generated as in the isotropic data model described in Section~\ref{sec:examples}.}
    \label{fig:threshold from propositions}
\end{figure}

\subsection{Proof of Proposition~\ref{sec:minimizer-l2-attacks}}

\begin{figure*}[t!]
    \centering
    \vspace{-10pt}
    \subfloat[$\param_\perp = \vv{0}$, variable $\delta$ ]{\includegraphics[width=0.45\textwidth]{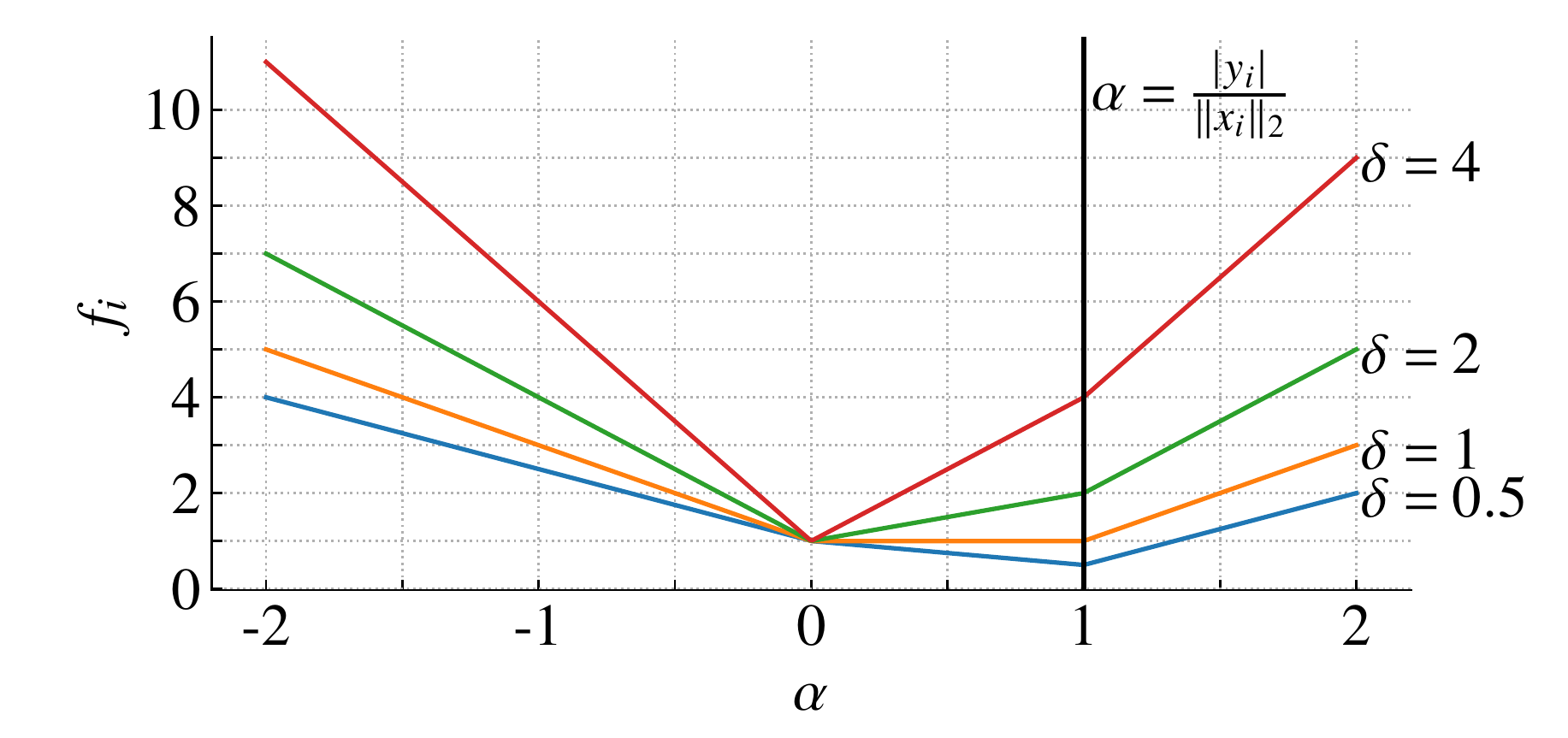}}
     \subfloat[$\delta =0.5$, variable $\param_\perp$]{\includegraphics[width=0.45\textwidth]{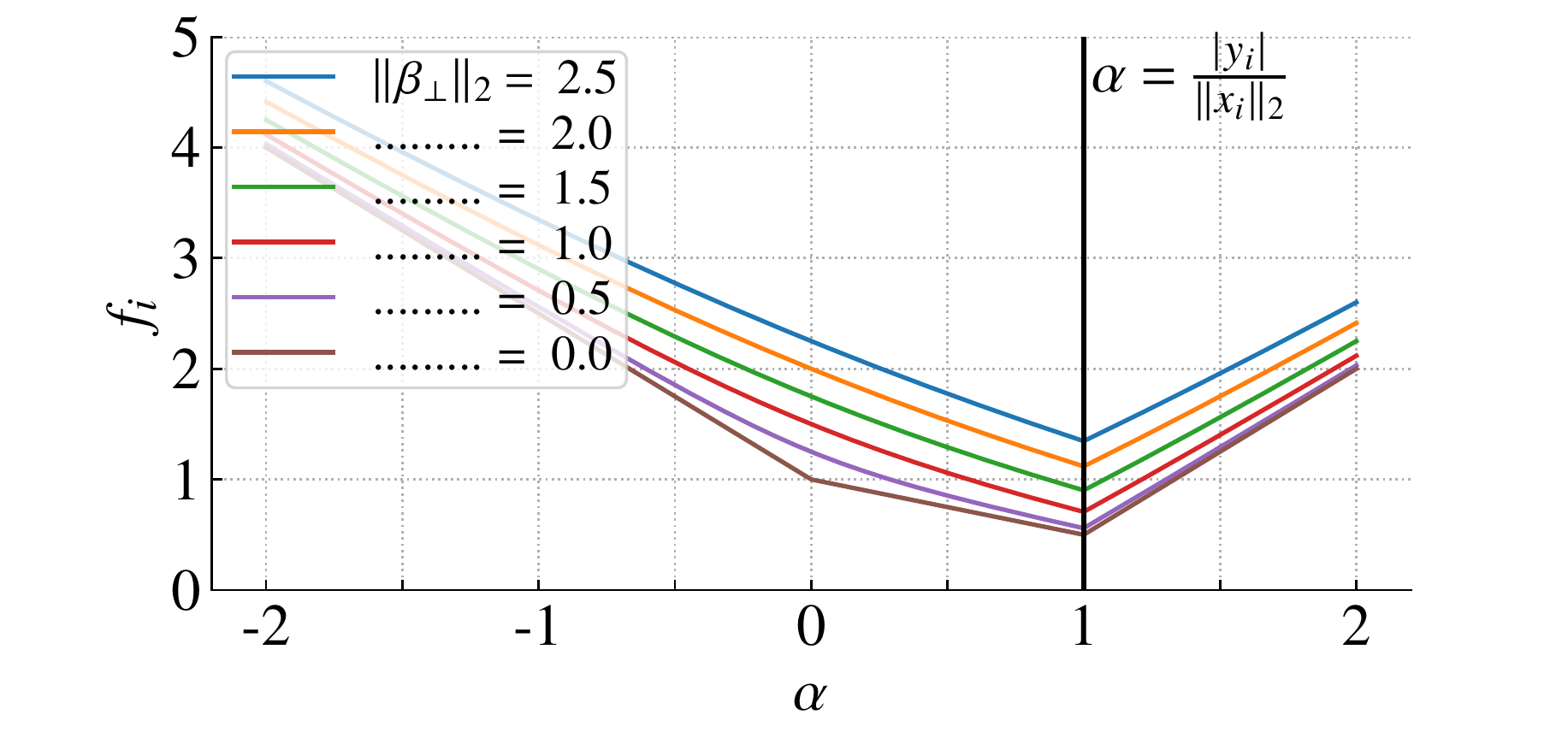}}
    \caption{\textbf{Function $f_i$.} The plot was generated for $|y_i| = \|x_i\|_2 = 1$. In (a), we consider $\param_\perp = \vv{0}$ and vary $\delta$. For $\delta < \|x_i\|_2$ the function has a minima at $\alpha = |y_i| / \|x_i\|_2$. For, $\delta > \|x_i\|_2$ it has a minima at $\alpha = 0$. For $\delta = \|x_i\|_2$ the function has a segment with slope 0 and is minimized at all the points in this segment. In (b), we consider $\delta = 0.5$ and vary $\|\param_\perp \|_2$. It has a minima at $\alpha = |y_i| / \|x_i\|_2$}
    \label{fig:fi_plot}
\end{figure*}

We use Equation~\eqref{eq:advtraining-closeform} as starting point for the proof. We will analyse the components that are being summed up. For convenience, let us denote the components:
$$f_{i}(\param) = |y_i - \x_i^\trnsp\param| + \delta\|\param\|_2.$$
such that $R_2^{\text{adv}}(\param; \delta) =   \frac{1}{n}\sum_{i=1}^n\left( f_{i}(\param)\right)^2$. We dropped $\delta$ and $p$ when referring to $f_{i}$ to make the notation more compact. We present first  the case where there is a $n=1$ (i.e., a single data point). This case gives insight into the proof and why abrupt transitions occur.

\begin{proof}[Proof of Proposition~\ref{sec:minimizer-l2-attacks} (n = 1)]
Given an arbitrary point $\x_1 \not = \vv{0}$,  $\param$ can be decomposed in two parts, one perpendicular to $\x_1$ and another one parallel to it. For instance, if we let $\vv{q}_1 = \text{sign}{(y_1)} \frac{\x_1}{\|\x_1\|_2 }$,  then we can write $\param = \alpha_1 \vv{q}_1 + \param_\perp$, where we denoted the perpendicular component as $\param_\perp$. Since the two components are perpendicular we have that $\|\param\|_2 > \alpha_1$ as long as $\param_\perp \not=\vv{0}$ and $f_1(\alpha_1 \vv{q}_1) \le f_{1}(\param)$. Hence the function is necessarily minimized along the subspace spanned by the vector $\vv{q}_i$. Now, along this line, the function $f_{1}$ is piecewise linear with three segments.  We show the function in Fig.~\ref{fig:fi_plot}(a) for different values of $\delta$. For $\delta < |\x_1^\trnsp \vv{q}_1| = \|\x_1\|_2 $, we have a minimum at $\alpha_1 = |y_1| / \|\x_1\|_2 $. Hence, $\eparam = \aarg \min_{\param}f_{1}(\param)  = \alpha_1 \vv{q}_1 = y_1 \frac{\x_1}{\|\x_1\|_2^2}$ and $\x_1^\trnsp \eparam = y_1$. And this conclude the proof for $n = 1$.
\end{proof}

It is easy to understand why abrupt transition occur by looking at the function displayed in Fig.~\ref{fig:fi_plot}(a). One of the three segments changes the slope sign as we change $\delta$ and the minimum of the function changes abruptly. Considering that $\param_\perp \not= 0$ is also instructive and gives the intuition why the result holds for $n\ge 1$.  If $\param_\perp \not= 0$, then
$$f_{1}(\alpha_1\vv{q}_1 + \param_\perp)  = |y_1 - \alpha_1  \x_1^\trnsp \vv{q}_1| +  \delta \sqrt{\alpha_1^2 + \|\param_\perp\|^2_2}.$$
Figure~\ref{fig:fi_plot}(b) display the value of $f_{i}$ as a function of $\alpha_1$ for different choices of $\|\param_\perp\|_2$.  The minimum will still occur for $ \x_1^\trnsp \vv{q}_1 \alpha_1 = y_1 $, as long as $\delta < | \x_1^\trnsp \vv{q}_1|$. The idea why the result holds for $n>1$ is that, while we might have to adjust for more than one coefficient $\alpha$, the coefficients can still be adjusted `independently', because adjusting the other coefficients will not really alter the minimum of the function $f_{1}$.

This intuition can be formalized using subderivatives. The subderivatives of a function $h: \R^m \rightarrow \R$ is the set $$\partial h(\x_0) = \{\vv{v} \in \R^m | h(\x) - h(\x_0) \ge \vv{v}( \x - \x_0)  \forall \x \in \R^m  \}.$$ 
The properties of subderivatives used here can be found on standard convex analysis textbooks, see for instance~\citet[chapter 4]{bertsekas_convex_2003} or \citep{clarke_optimization_1990, boyd_subgradients_2022}.

\begin{proof}[Proof of Proposition~\ref{sec:minimizer-l2-attacks} (general case)]

Let $\{\vv{q}_i\}_{i=1}^n$ be the rows of $\mm{Q}$, i.e., a set of unitary mutually orthogonal vectors that span the set of vectors $\{\x_i\}_{i=1}^n$. If we denote $\vv{r}_{i} = \x_i^\trnsp \mm{Q}$, any vector $\x_i$ can be written as $\x_i = \mm{Q}\vv{r}_{i}$.Moreover, we can write $\param =  \mm{Q}^\trnsp \vv{\alpha} + \param_\perp$ where $\param_\perp$ is perpendicular to all $\x_i$. The same argument as before yields that the minimizer of $f_{i}$ always occurs for $\param_\perp = \vv{0}$. Hence, without loss of generality we can assume  $\param = \mm{Q}^\trnsp \vv{\alpha}$. Hence
$$f_{i}\left(\mm{Q}^\trnsp \vv{\alpha}\right) =\left|y_i -  \vv{r}_i^\trnsp \vv{\alpha}\right| + \delta\|\vv{\alpha}\|_2.$$
Now, the subderivative of the above is given by following set $\partial f_{i}(\mm{Q}^\trnsp \vv{\alpha}) \subset \R^n$, 
$$\partial f_{i}(\mm{Q}^\trnsp \vv{\alpha}) = \vv{r}_i \partial |y_i - \vv{r}_i^\trnsp \vv{\alpha}|  +\delta \partial \|\vv{\alpha}\|_2$$
where $\partial \|\vv{\alpha}\|_2 = \left\{\frac{\vv{\alpha}}{\|\vv{\alpha}\|_2}\right\}$ if $\vv{\alpha} \not= 0$ and $\{\vv{v}: \|\vv{v}\|_2 \le 1\}$ otherwise. And
$$\partial |y_i - \vv{r}_i^\trnsp \vv{\alpha}| = \begin{cases}
\{1\}\text{ if } y_i > \vv{r}_i^\trnsp \vv{\alpha}\\\
\{-1\}\text{ if } y_i < \vv{r}_i^\trnsp \vv{\alpha}\\
\{\gamma: \gamma \in [-1, 1]\}
\text{ if }  y_i = \vv{r}_i^\trnsp \vv{\alpha}
\end{cases}
$$
Let $r_{i, j}$ be the $j^{\text{th}}$ component of $\vv{r}_j$.
Notice that, if $\delta \le |r_{i, j}|$ for all $j$ for which $|r_{i, j}| > 0$, then $\vv{0} \in \partial f_{i}(\mm{Q}^\trnsp \vv{\alpha})$ if and only if $y_i = \vv{r}_i^\trnsp \vv{\alpha}$.  Notice that any $\delta$ satisfying the hypothesis of the theorem satisfies such inequality.
Now, using the chain rule and additivity of subderivatives:
\begin{equation}
   \partial  R_2^{\text{adv}}  (\mm{Q}^\trnsp \vv{\alpha}) =   \frac{2}{n}\sum_{i=1}^n f_{i} (\mm{Q}^\trnsp \vv{\alpha}) \partial f_{i} (\mm{Q}^\trnsp \vv{\alpha}).
\end{equation}
Hence for $\delta$ satisfy the hypothesis of the theorem and $y_i = \vv{r}_i^\trnsp \vv{\alpha}$ than $\vv{0} \in \partial f_{i}(\mm{Q}^\trnsp \vv{\alpha})$. If this hold for all $i$, then  $\vv{0} \in \partial  R_2^{\text{adv}}(\mm{Q}^\trnsp\vv{\alpha})$. By the optimality condition for subderivatives, a point where this is satisfied local minima, and, since the function is convex, a global minimum. Hence, any point in the set $\{\mm{Q}^\trnsp\vv{\alpha} | \mm{r}_i \vv{\alpha} = y_i, \forall i\}$ will be a minima. Now, $\vv{r}_i  = \mm{Q} \x_i$ and this set is the same as the set $ \{\param | \X \vv{\beta} = \y\}$. The theorem follows.
\end{proof}

\subsection{Proof of Proposition~\ref{sec:minimizer-lp-attacks}}

The proof follows a similar outline. Let us denote:
$$g_{i}\left(\param\right) =\left|y_i -  \x_i^\trnsp \param\right| + \delta\|\param\|_q.$$
such that $R_p^{\text{adv}}(\param; \delta) =   \frac{1}{n}\sum_{i=1}^n\left( g_{i}(\param)\right)^2$. Now, the subderivative of  $g_i$ is given by following set $\partial g_{i}(\param) \subset \R^m$, 
$$\partial g_{i}(\param) = \vv{x}_i \partial |y_i - \x_i^\trnsp \param|  +\delta \partial \|\param\|_q$$
where 
$$\partial \|\vv{\beta}\|_q = \left\{\vv{g}~|~\|\vv{g}\|_p\le 1, \vv{g}^\trnsp \vv{\beta} \right\}
$$
Note that, by the above definition for any $\vv{g} \in \partial \|\vv{\beta}\|_q$, ${g_i \le 1}$. Hence, if $\delta \le |x_{i,j}|$ for every $j$, then $\vv{0} \in \partial g_{i}(\param)$ if and only if $y_i = \x_i^\trnsp \param$. And the rest of the argument follows exactly the same

\section{Conclusion}
\label{sec:conclusion}

In this work, we have studied adversarial training for linear regression from an optimization perspective. By viewing the method as an instance of robust regression we have established a connection to parameter-shrinkage methods and minimum-norm interpolators. These properties in linear regression may also be relevant in many nonlinear cases. At the moment, neural networks are still where most research on adversarial training is conducted.  Nonetheless, we believe understanding adversarial training in linear settings is an important milestone. Moreover, adversarial training in linear regression is an interesting method in its own right. Linear models are still among the most commonly used techniques and the method is a practical solution that can be of interest.  The method has a clear interpretation and addresses worst-case scenarios.  Still, a thorough theoretical characterization of the method and more technical  developments are needed to make it competitive.
For instance, while convex, the optimization is still quite inefficient. Coordinate descent~\cite{friedman_pathwise_2007, friedman_regularization_2010} and LARS~\cite{efron_least_2004}  are clever solutions that allow the entire regularization path to be efficiently estimated for lasso and elastic net. Having tailored solvers for linear adversarial training could make the method more computationally attractive and useful.

\acknowledgments{
The authors would like to thank  Carl Nettelblad for very fruitful discussions throughout the work with this research. This research was financially supported by the \emph{Kjell och M{\"a}rta Beijer Foundation} and by the Swedish Research Council through the projects: \emph{Deep probabilistic regression -- new models and learning algorithms }(contract number: 2021-04301)  and  \emph{Counterfactual Prediction Methods for Heterogeneous Populations} (contract number: 2018-05040). }

\printbibliography


\newpage

\onecolumn

\appendix

\aistatstitle{Surprises in adversarially-trained linear regression: \\ \textit{Supplementary Material}}
\thispagestyle{empty}


\setcounter{equation}{0}
\renewcommand{\theequation}{S.\arabic{equation}}%

\setcounter{figure}{0}
\renewcommand{\thefigure}{S.\arabic{figure}}%

\section{Latent feature models}
\label{sec:latent-feature}

We include here results for a different synthetic dataset: the ``latent space'' feature model described in~\citet[Section 5.4]{hastie_surprises_2019}. The features~$x$ are noisy observations of a lower-dimensional subspace of dimension~$d$. A vector in this \textit{latent space} is represented by $z \in \R^d$. This vector is indirectly observed via the features $x \in \R^m$ according to
\begin{equation}
    \label{eq:latent-model-features}
    x = W z + u,
\end{equation}
where $W$ is an $m \times d$ matrix, for $m \le d$. We assume that the responses are described by a linear model in this latent space
\begin{equation}
    \label{eq:latent-model-outputs}
    y = \theta^\trnsp z + \xi,
\end{equation}
where $\xi \in \R$ and $u\in \R^m$ are mutually independent noise variables. Moreover, $\xi \sim \N(0, \sigma_{\xi}^2)$ and $u \sim \N\left(0, I_m\right)$. As in~\citet{hastie_surprises_2019}, we consider the features in the latent space to be isotropic and normal $z_i = \N\left(0, I_d\right)$ and choose $W$ such that its columns are orthogonal, $W^\trnsp W = \frac{m}{d} I_d$, where the factor $ \frac{m}{d}$ is introduced to guarantee that the signal-to-noise ratio of the feature vector $x$ (i.e. $\frac{\|W z_i\|_2^2}{\|u_i\|_2^2}$) is kept constant.

Here we show experiments for $\sigma_{\xi}=0.1$ and $d = 20$. All the experiments are analogous to the experiments performed for the isotropic case. Fig.~\ref{fig:distance-min-norm-sol-latent} shows the training mean square error (MSE) for $m = 200$ and $n = 60$ fixed as we vary $\delta$. It also shows the thresholds for interpolation obtained from Propositions~\ref{sec:minimizer-lp-attacks} and~\ref{sec:minimizer-l2-attacks}. Fig.~\ref{fig:training-error-advtraining-latent} shows the training mean square error (MSE) of the learned predictors as we keep $\delta$ and  $n$ constant and vary the number of features $m$.

\begin{figure}[ht]
    \centering
    \includegraphics[width=0.45\textwidth]{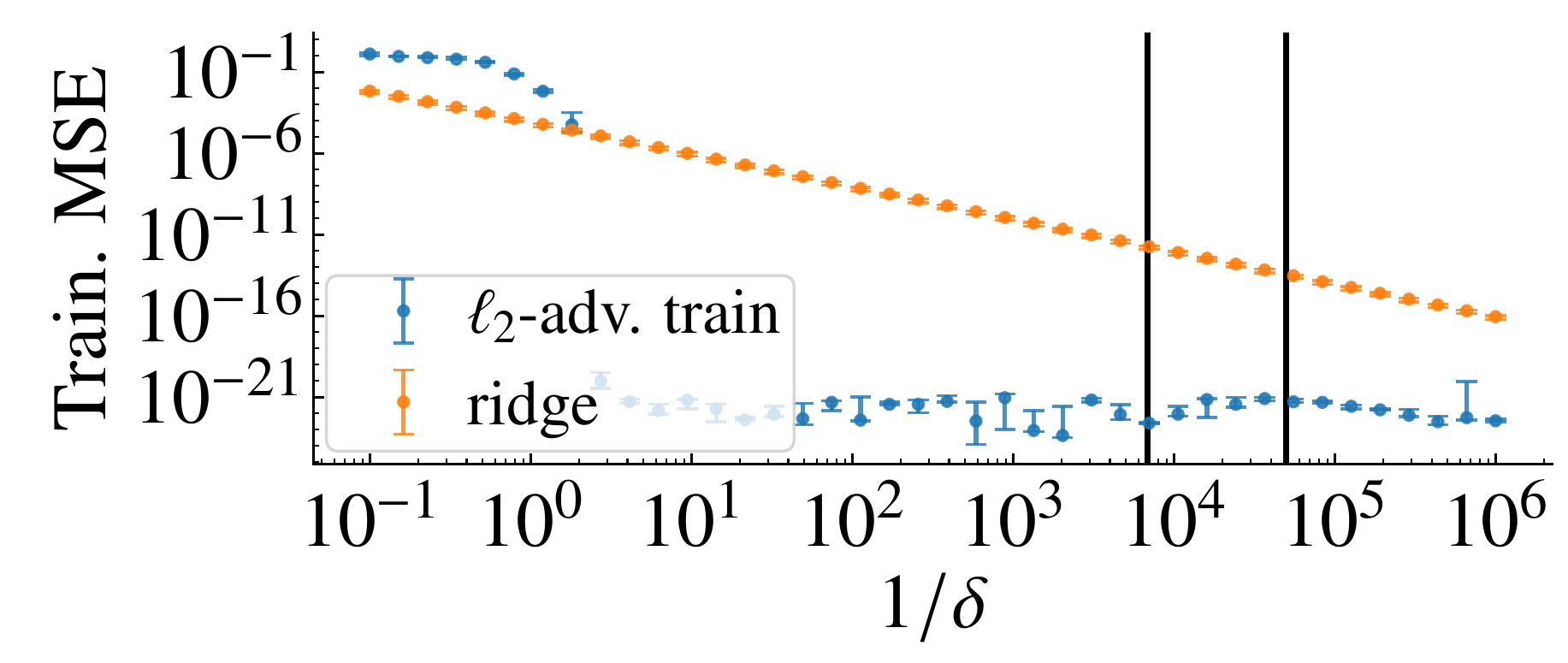}
    \includegraphics[width=0.45\textwidth]{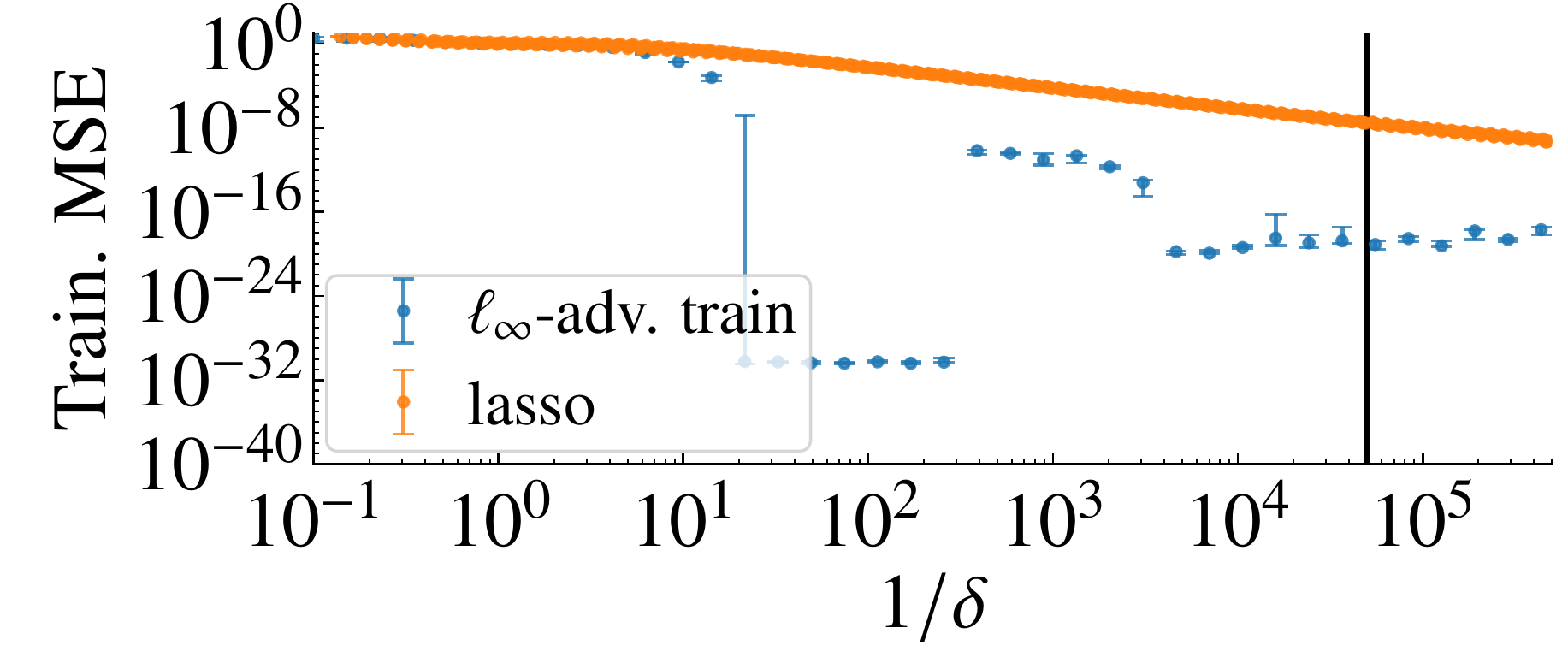}
    \caption{\textbf{Training MSE \textit{vs} regularization parameter (Latent model)}. \emph{Left:} for ridge and $\ell_2$-adversarial training. \emph{Right:}  for lasso and $\ell_\infty$-adversarial training The error bars give the median and the 0.25 and 0.75 quantiles obtained from numerical experiment (5 realizations). Vertical black lines give $\gamma_{\min{}} (\X)$ and, on the figure to the left,  $\gamma_{\min} (\mm{Q}\X)$. I.e., the thresholds obtained from Propositions~\ref{sec:minimizer-lp-attacks} and~\ref{sec:minimizer-l2-attacks}.}
    \label{fig:distance-min-norm-sol-latent}
\end{figure}

\begin{figure}
    \centering
    \subfloat[Ridge regression]{\includegraphics[width=0.4\textwidth]{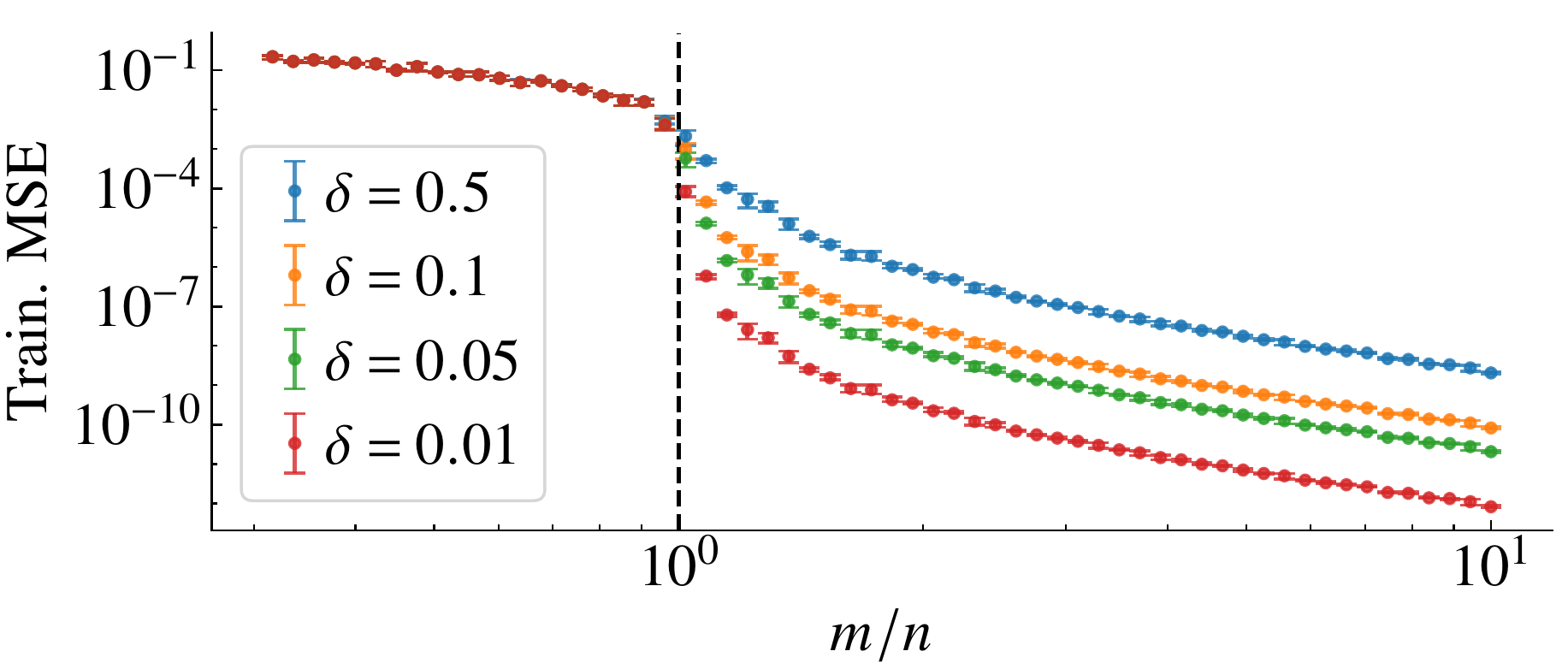}}
    \subfloat[Adversarial training $\ell_2$ ]{\includegraphics[width=0.4\textwidth]{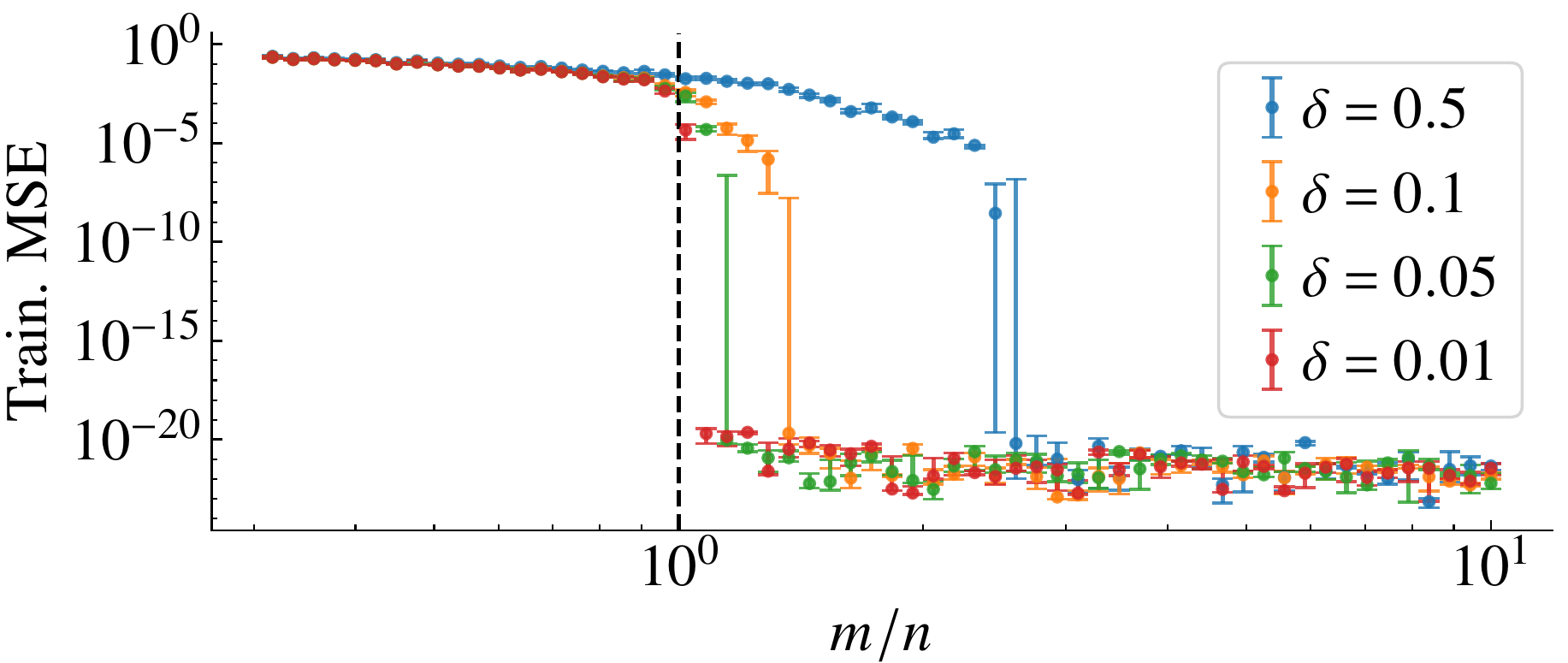}}\\
    \subfloat[Lasso regression]{\includegraphics[width=0.4\textwidth]{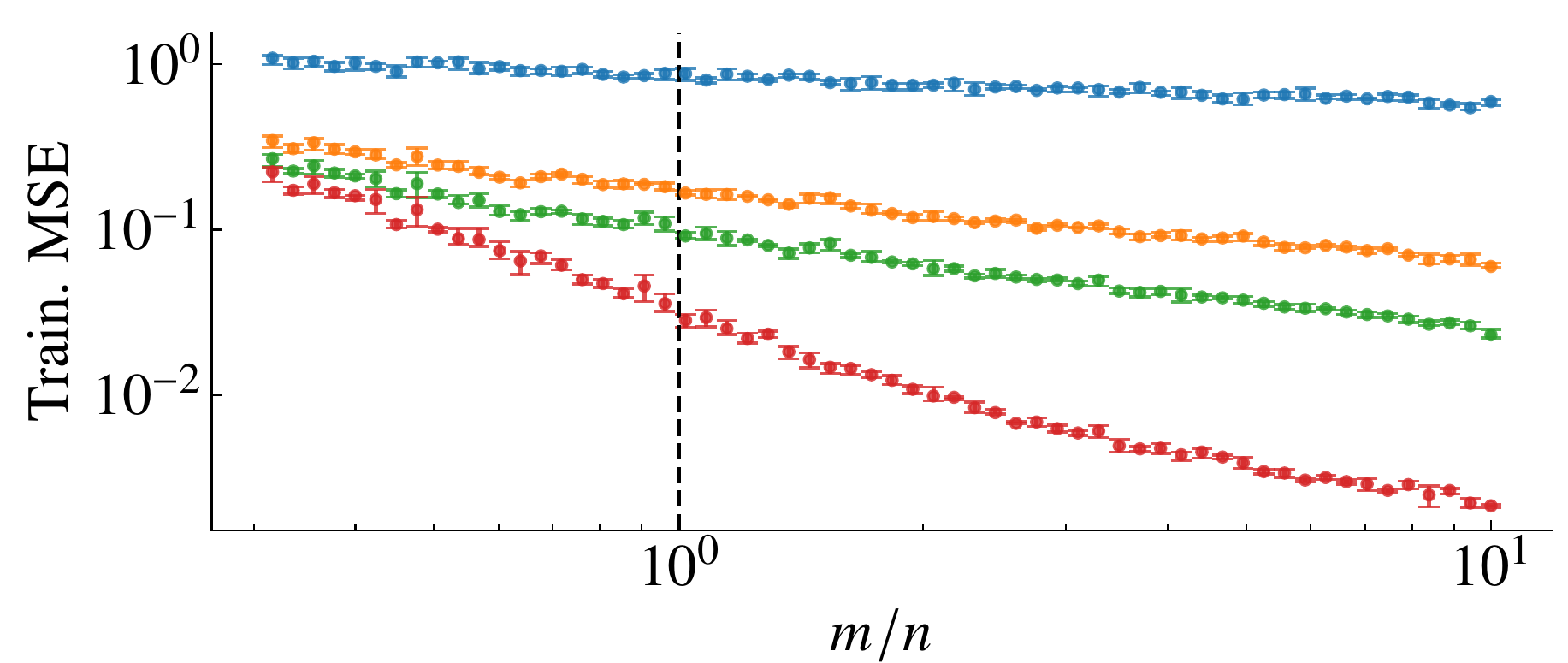}}
    \subfloat[Adversarial training  $\ell_\infty$  ]{\includegraphics[width=0.4\textwidth]{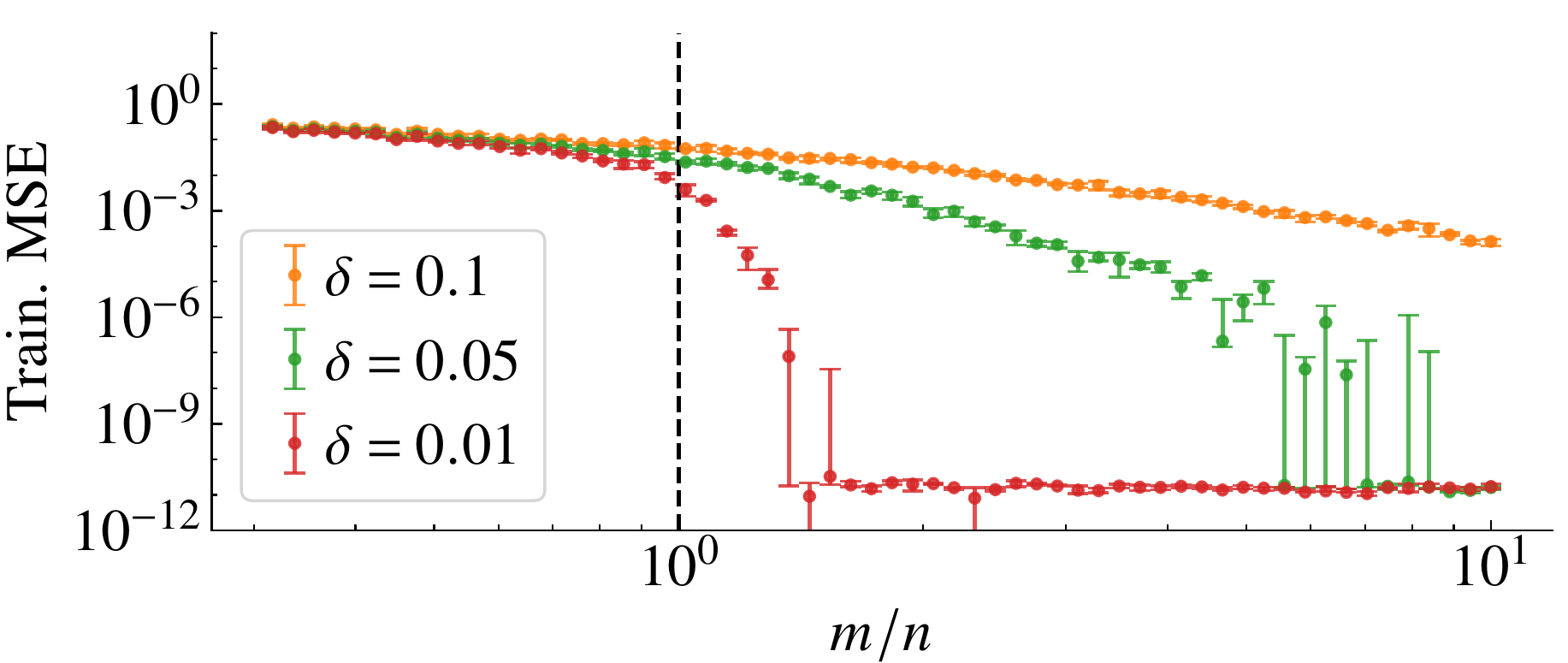}}\\
    \caption{\textbf{Mean square error in training data: latent features}. The plot is equivalent to Fig.~\ref{fig:training-error-advtraining} but with a different data generation procedure. The error bars give the median and the 0.25 and 0.75 quantiles obtained from numerical experiments (6 realizations) and for regularization parameter $\delta = \{0.5,0.1,0.05,0.01\}$ with the colors corresponding to each case indicated in the legend. In (d), we omit the plot for $\delta=0.5$, that is because for this high regularization the solver becomes ill-conditioned.}
    \label{fig:training-error-advtraining-latent}
  \end{figure}

\section{Additional plots: test MSE and parameter norm}

Here we include additional plots related to the experiments, Fig.~\ref{fig:norm-advtraining} show the parameter $\ell_2$-norm and Fig.~\ref{fig:risk-advtraining} show the performance on a hold-out test set for experiments in the isotropic generated data.  Fig.~\ref{fig:norm-advtraining-latent} and Fig.~\ref{fig:risk-advtraining-latent} show the same quantities for experiments in data generated with the latent model. In these plots, we keep $\delta$ and  $n$ constant and vary the number of features $m$.

\begin{figure}[ht]
    \centering
    \subfloat[Ridge regression]{\includegraphics[width=0.4\textwidth]{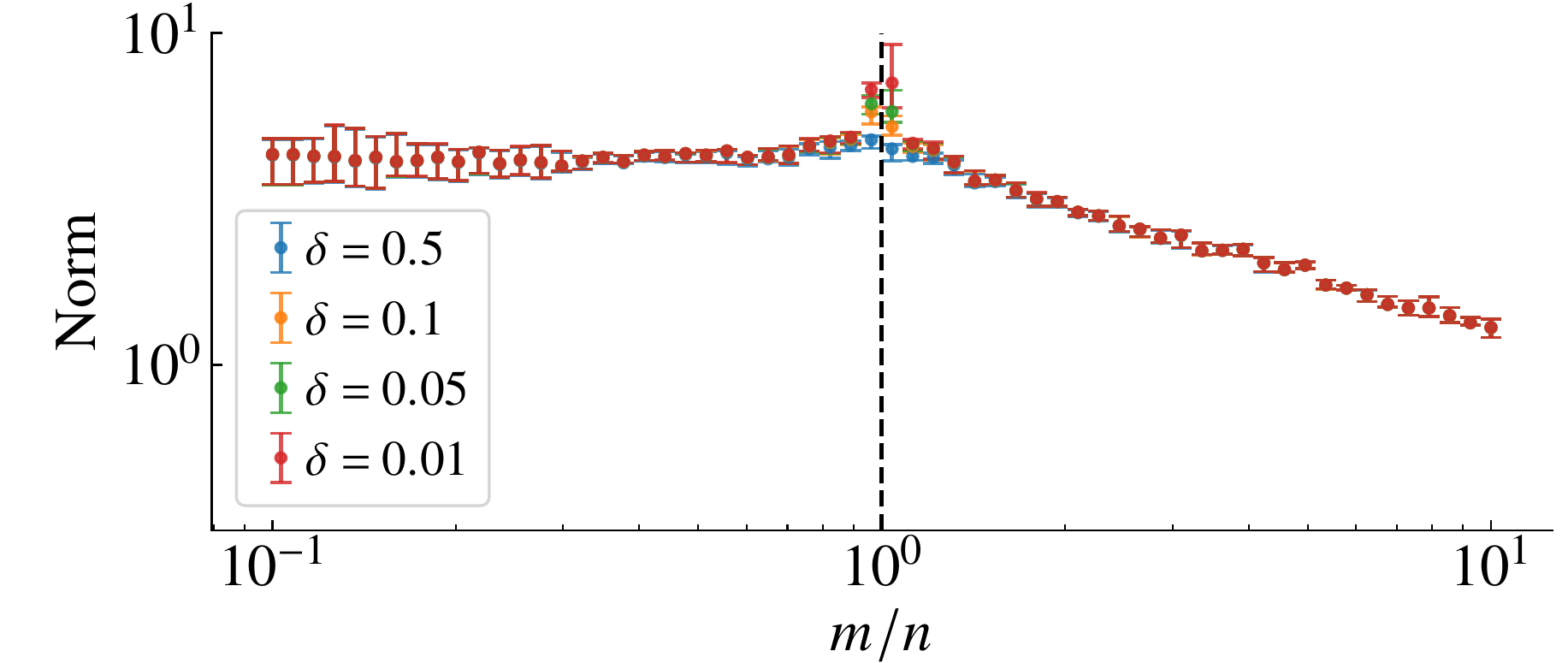}}
    \subfloat[Adversarial training $\ell_2$ ]{\includegraphics[width=0.4\textwidth]{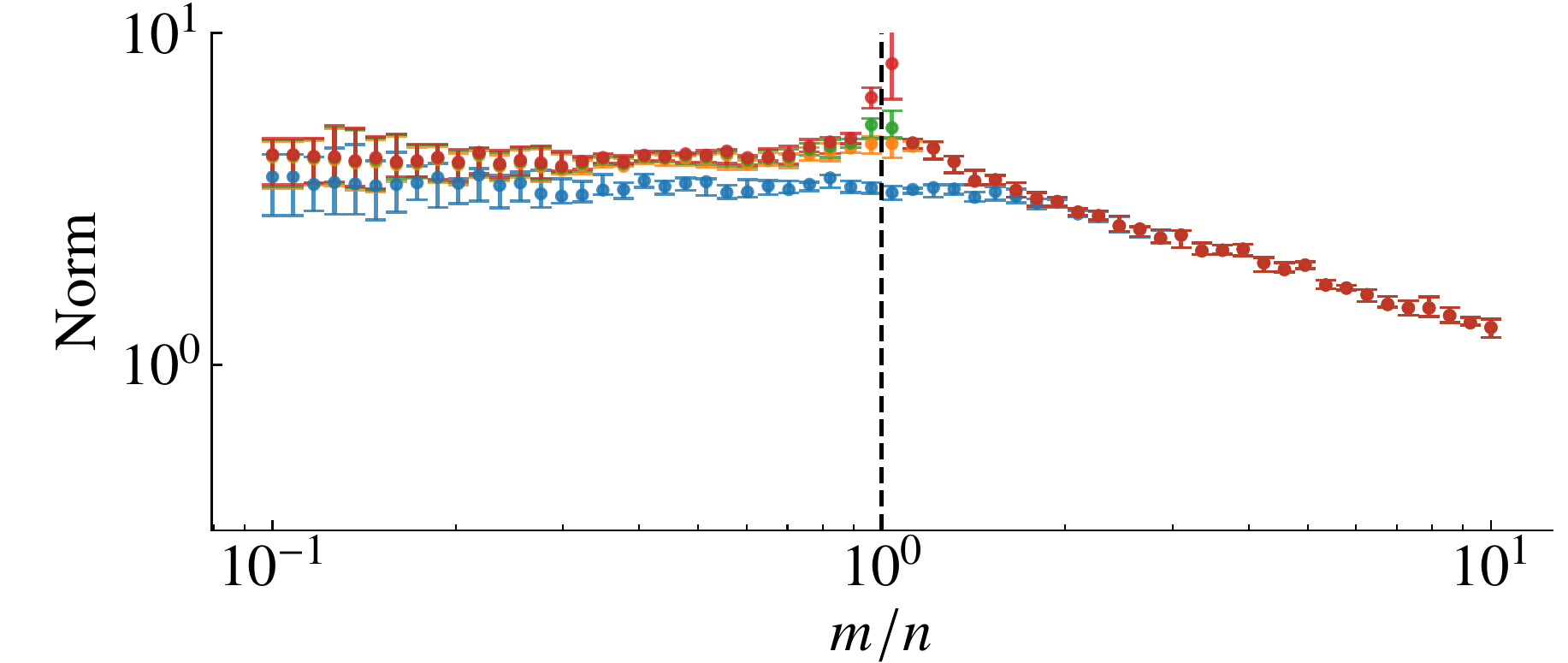}}\\
    \subfloat[Lasso regression]{\includegraphics[width=0.4\textwidth]{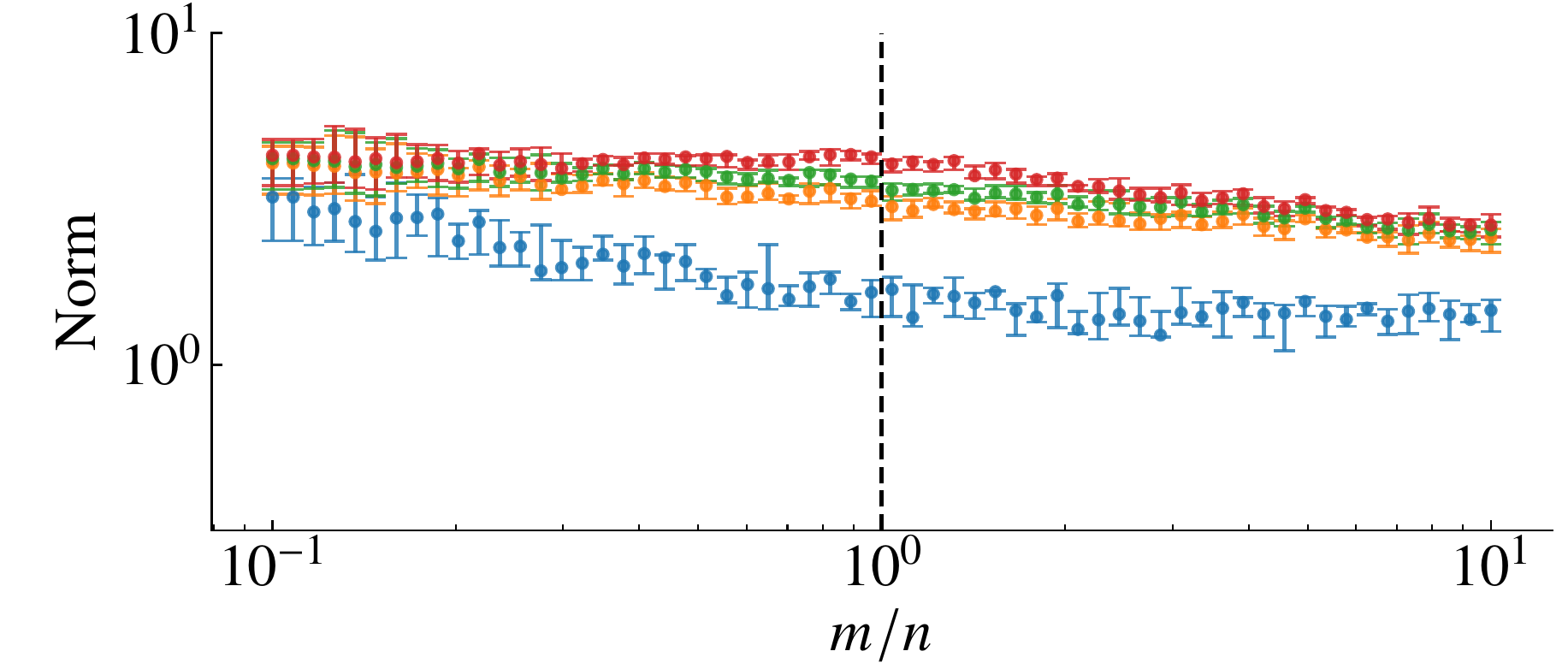}}
    \subfloat[Adversarial training  $\ell_\infty$  ]{\includegraphics[width=0.4\textwidth]{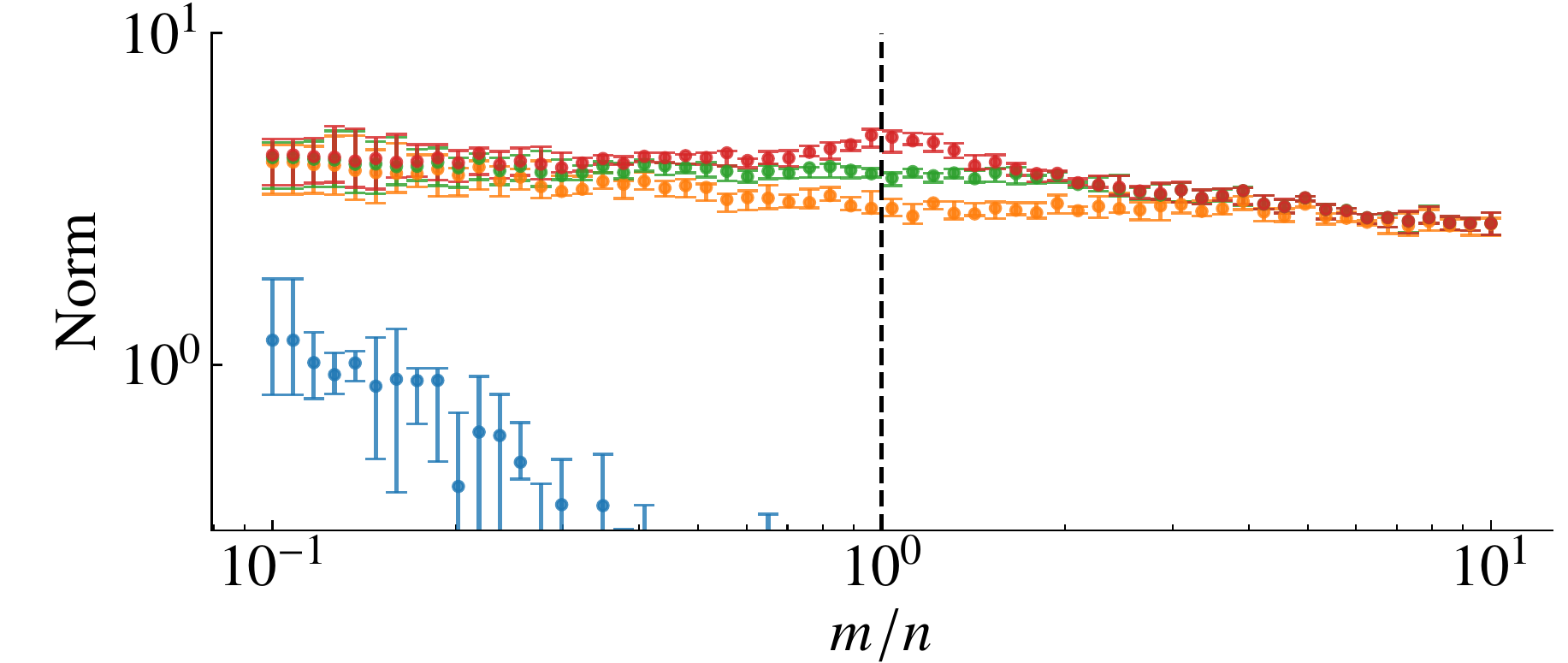}}\\
    \caption{\textbf{Parameter $\ell_2$-norm: isotropic case}. We plot $\|\widehat{\beta}\|_2$ for the parameters estimated by the different methods. The setup is exactly the same as Fig.~\ref{fig:training-error-advtraining}. In (d), the norm for $\delta=0.5$ becomes very close to zero  $\approx10^{-10}$, we let it leave the region in the plot so it is possible to better visualize the curves for the other regularization parameters.} 
    \label{fig:norm-advtraining}
\end{figure}

  \begin{figure}
    \centering
    \subfloat[Ridge regression]{\includegraphics[width=0.4\textwidth]{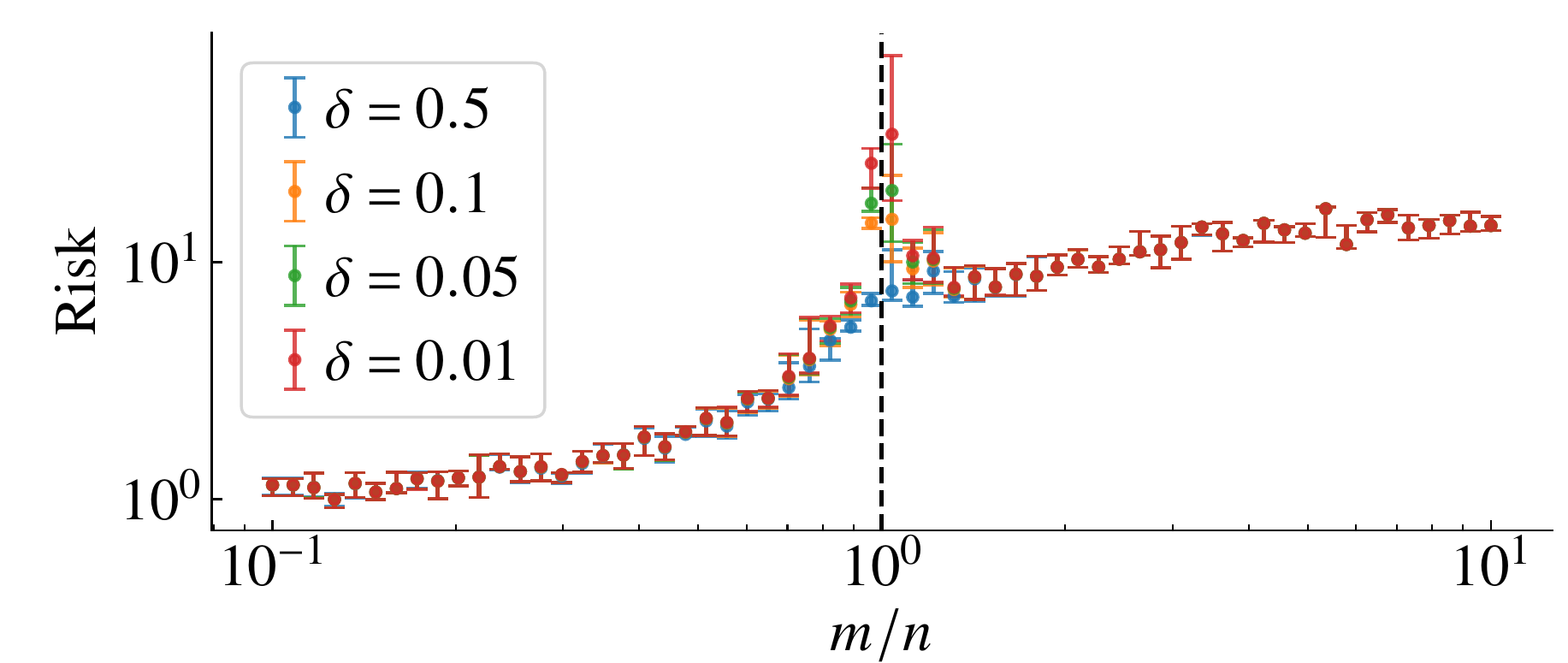}}
    \subfloat[Adversarial training $\ell_2$ ]{\includegraphics[width=0.4\textwidth]{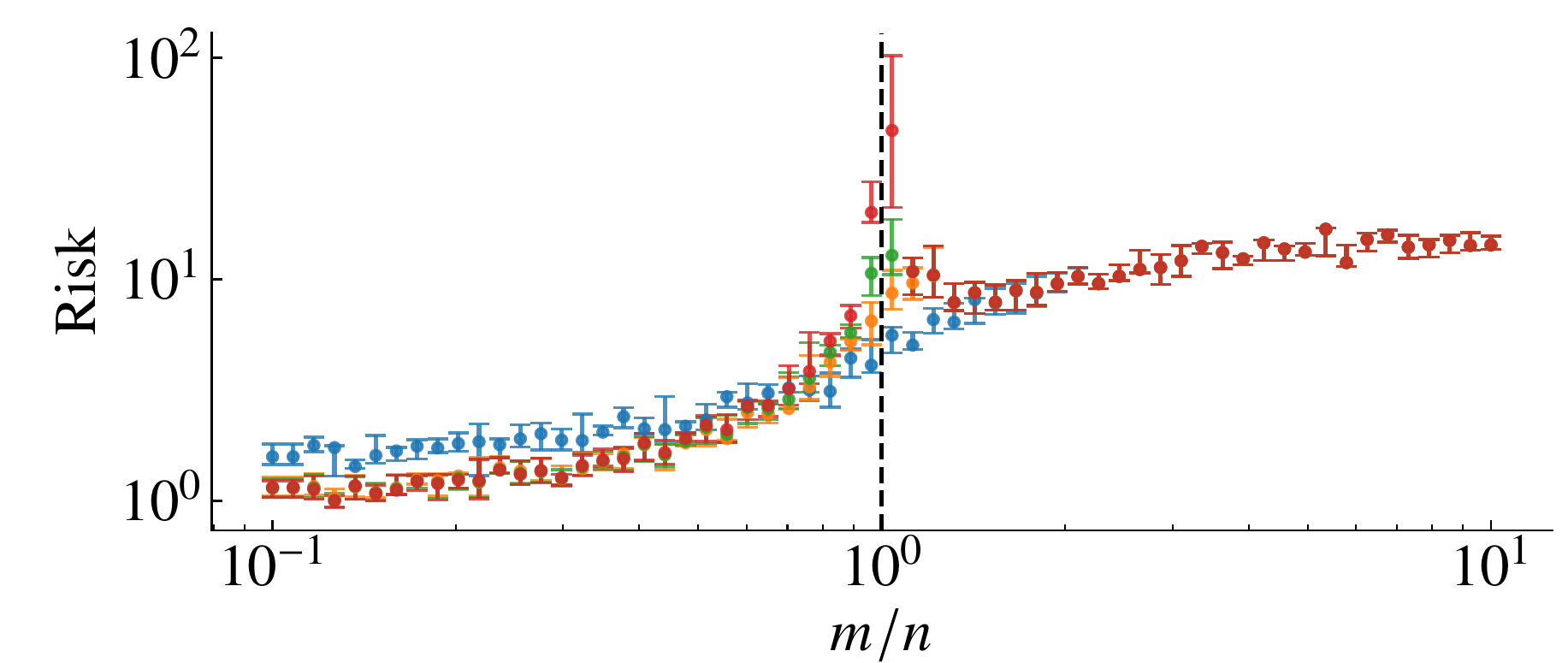}}\\
    \subfloat[Lasso regression]{\includegraphics[width=0.4\textwidth]{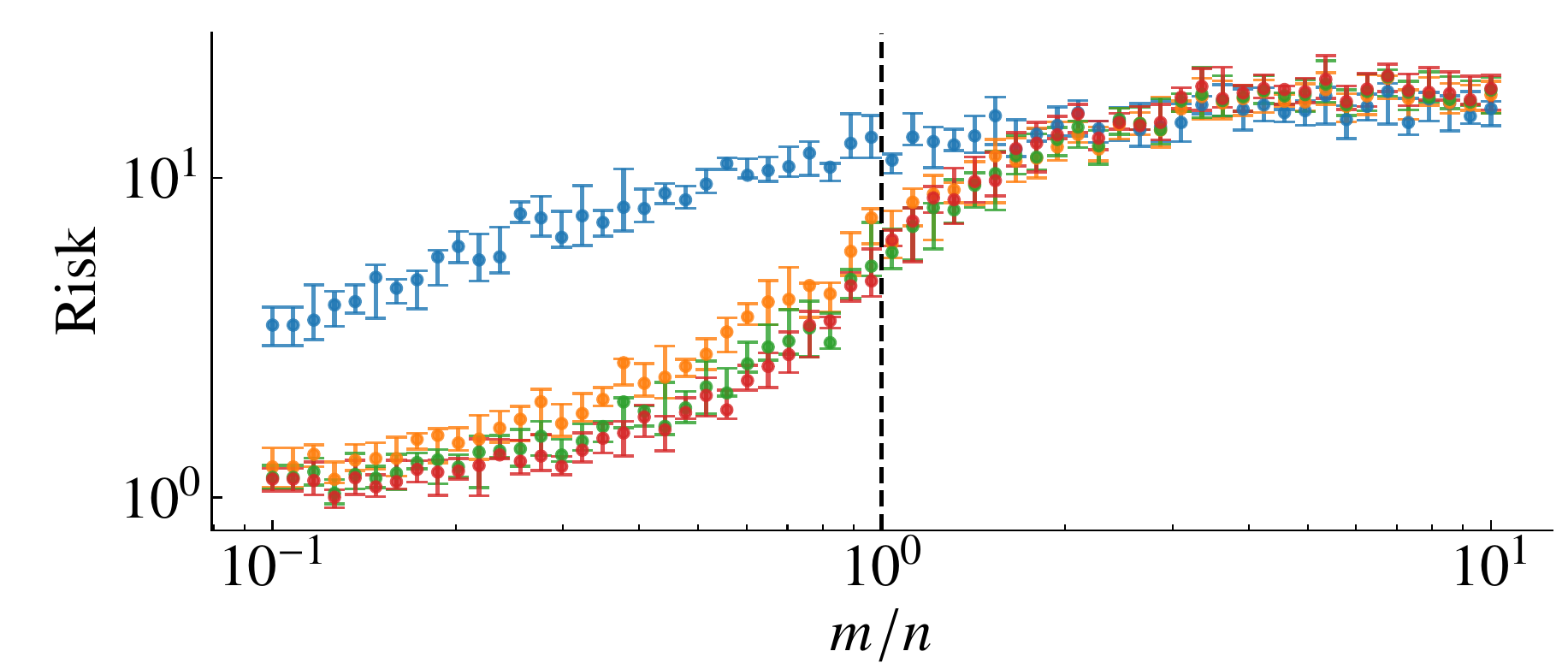}}
    \subfloat[Adversarial training  $\ell_\infty$  ]{\includegraphics[width=0.4\textwidth]{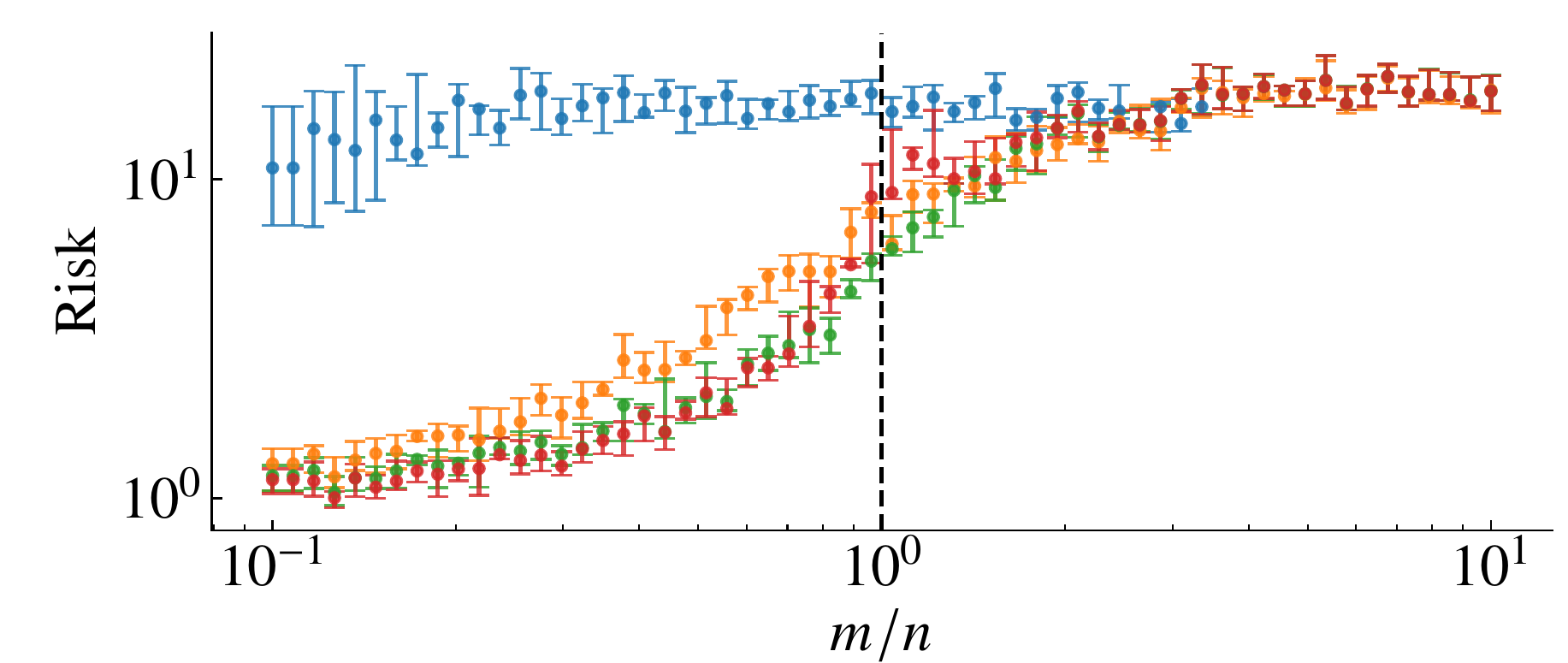}}\\
    \caption{\textbf{Test MSE: isotropic case}. We plot the MSE in a test dataset of size $n_\text{test} = 100$ for the parameters estimated by the different methods. The setup is exactly the same as in  Fig.~\ref{fig:training-error-advtraining}.}
    \label{fig:risk-advtraining}
\end{figure}

\begin{figure}
    \centering
    \subfloat[Ridge regression]{\includegraphics[width=0.4\textwidth]{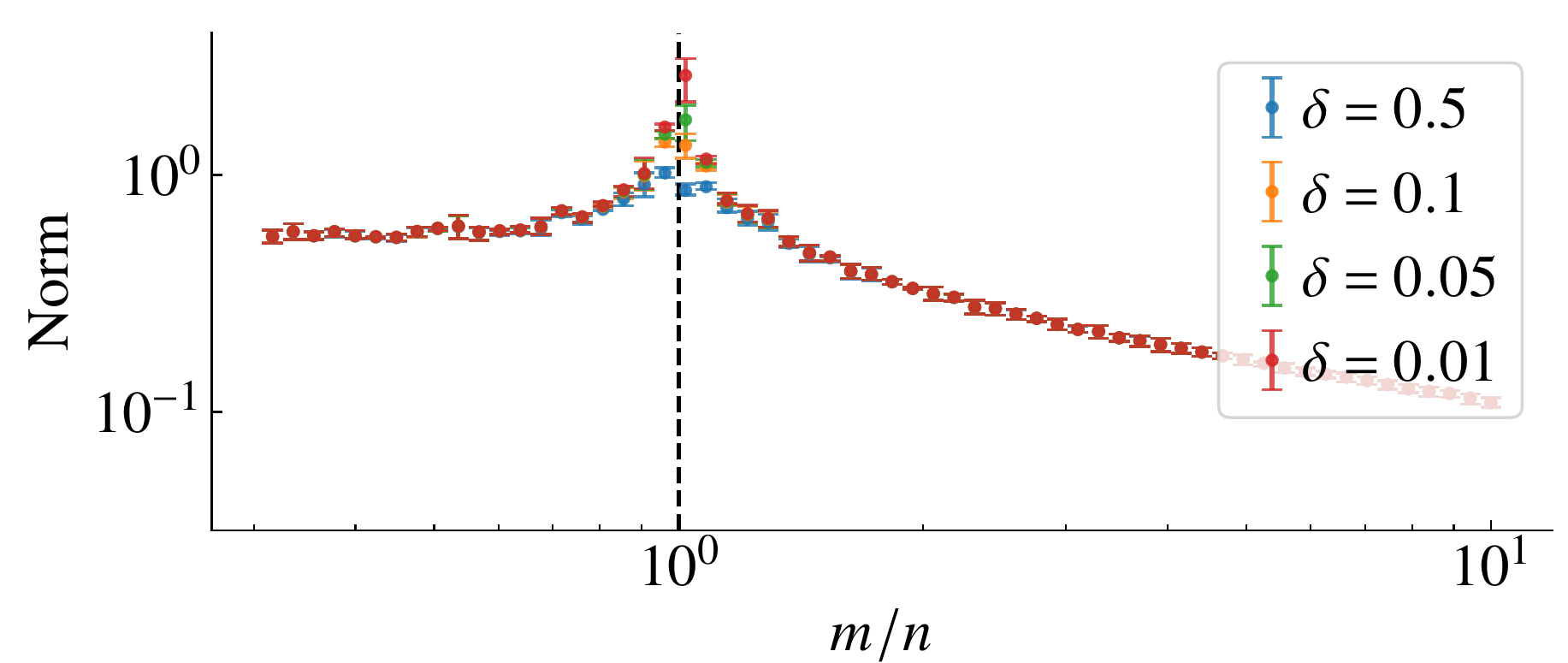}}
    \subfloat[Adversarial training $\ell_2$ ]{\includegraphics[width=0.4\textwidth]{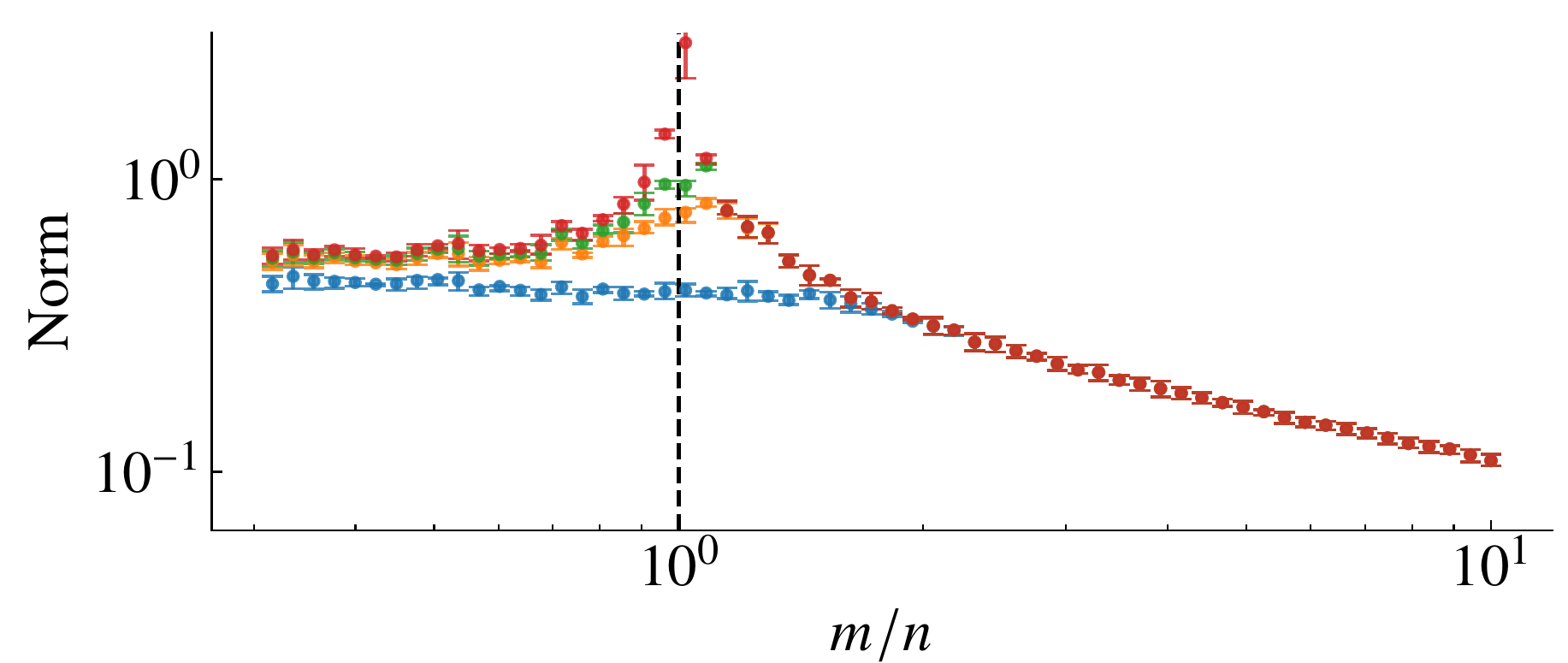}}\\
    \subfloat[Lasso regression]{\includegraphics[width=0.4\textwidth]{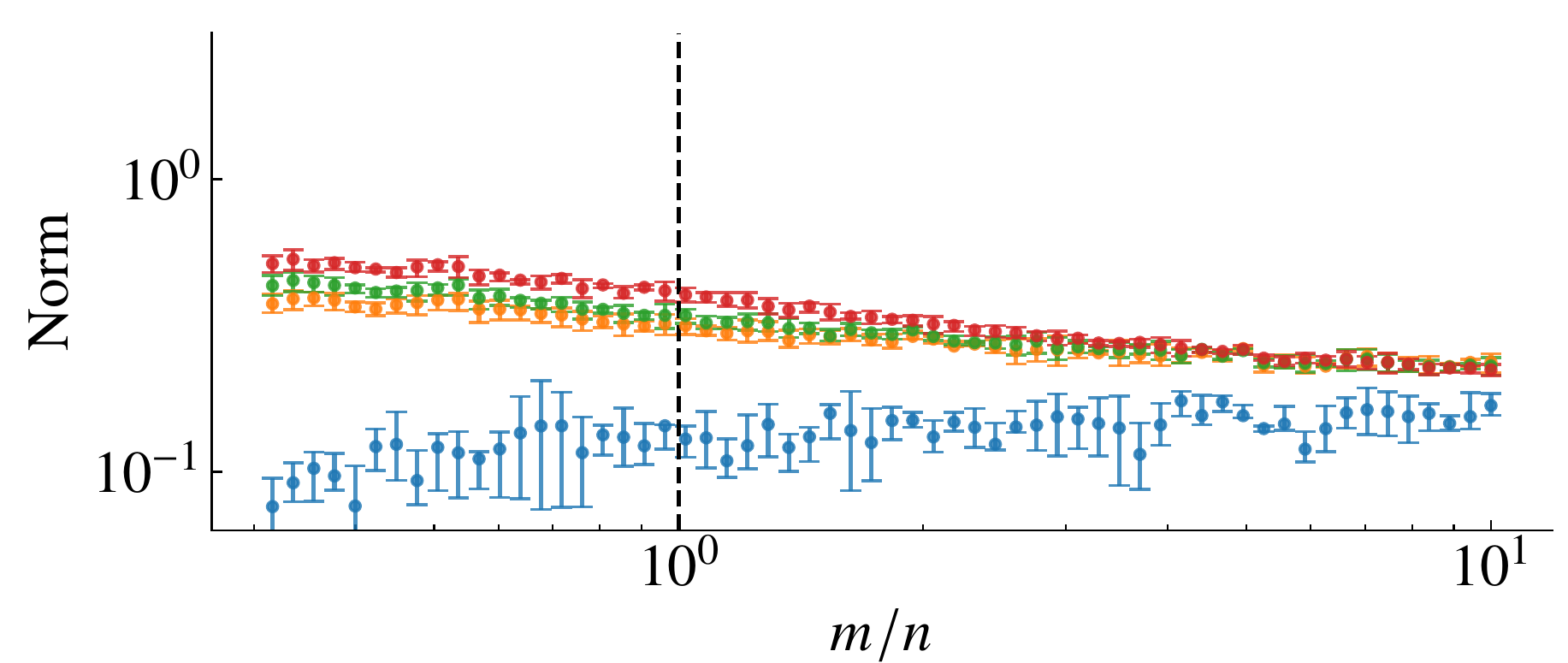}}
    \subfloat[Adversarial training  $\ell_\infty$  ]{\includegraphics[width=0.4\textwidth]{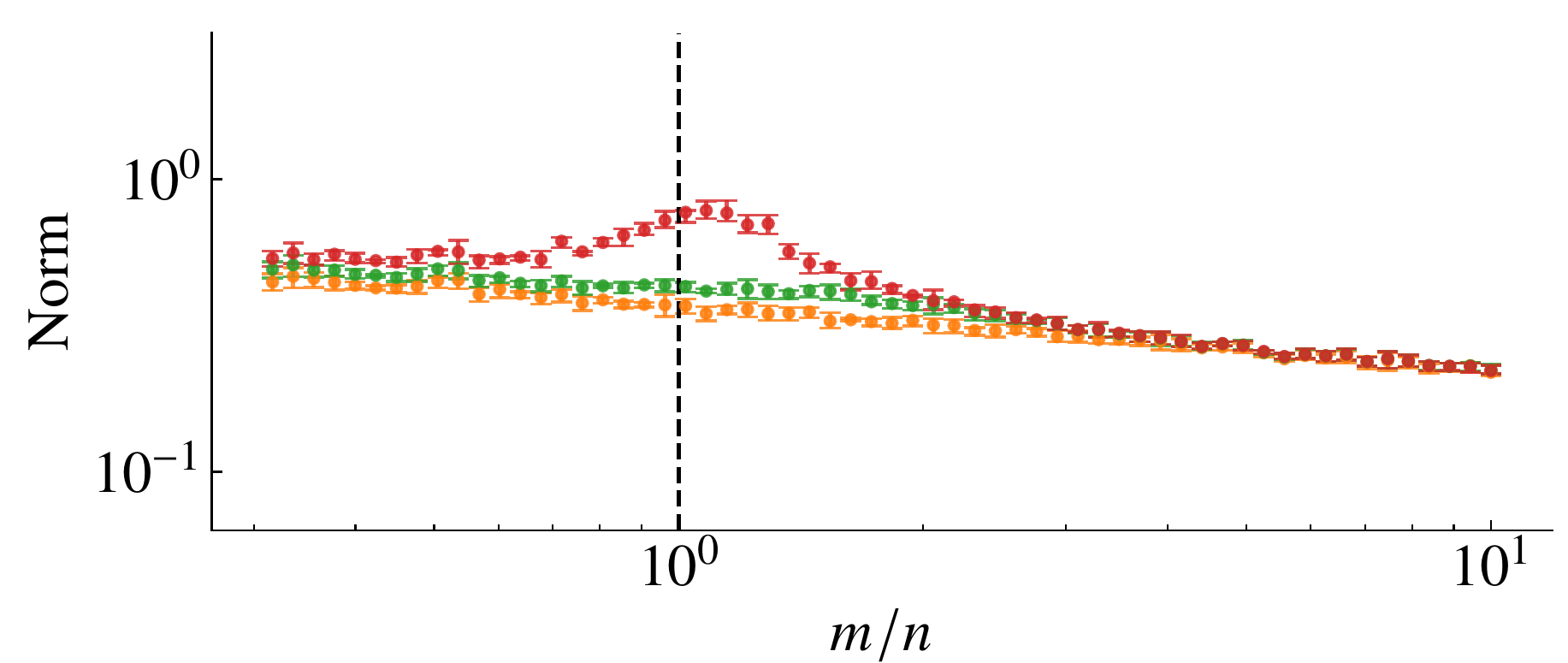}}\\
    \caption{\textbf{Parameter $\ell_2$-norm: latent feature model}. We plot $\|\widehat{\beta}\|_2$ for the parameters estimated by the different methods. The setup is exactly the same as Fig.~\ref{fig:training-error-advtraining-latent}.}
    \label{fig:norm-advtraining-latent}
\end{figure}

  \begin{figure}
    \centering
    \subfloat[Ridge regression]{\includegraphics[width=0.4\textwidth]{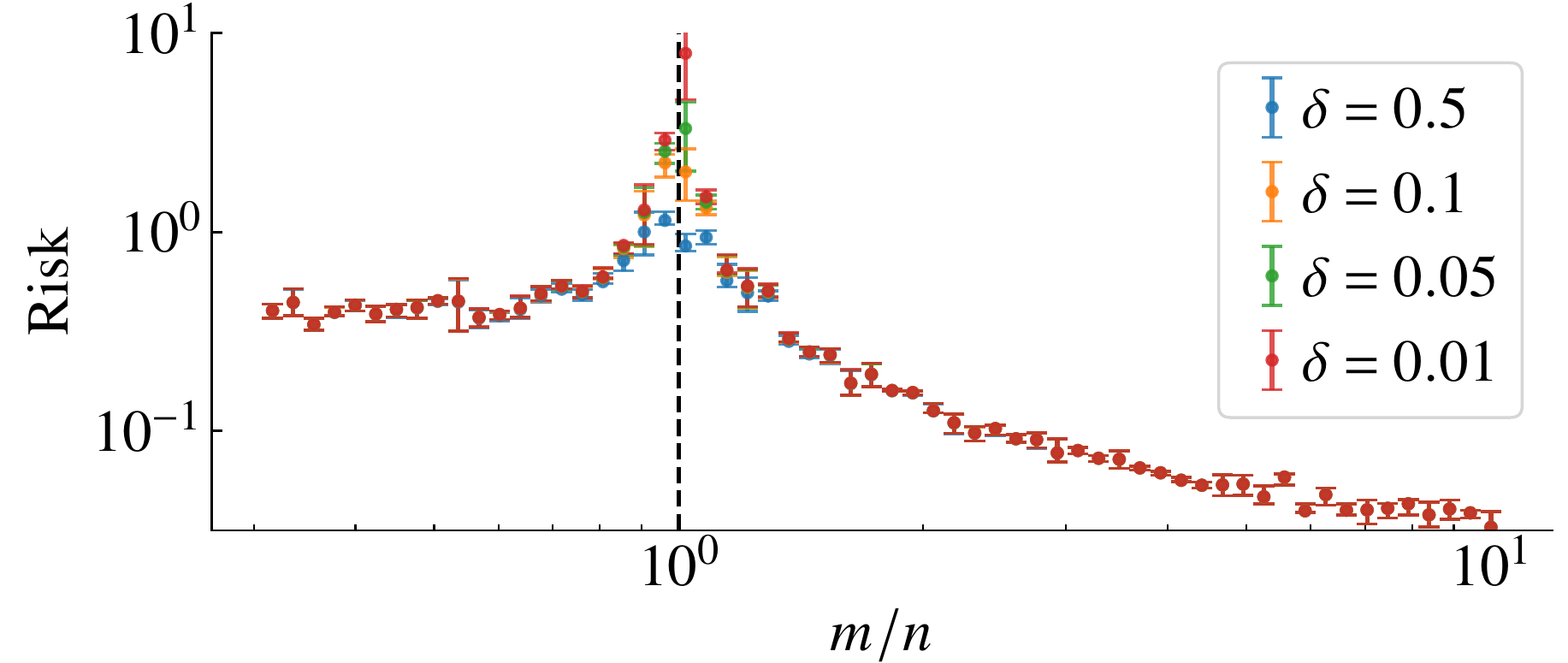}}
    \subfloat[Adversarial training $\ell_2$ ]{\includegraphics[width=0.4\textwidth]{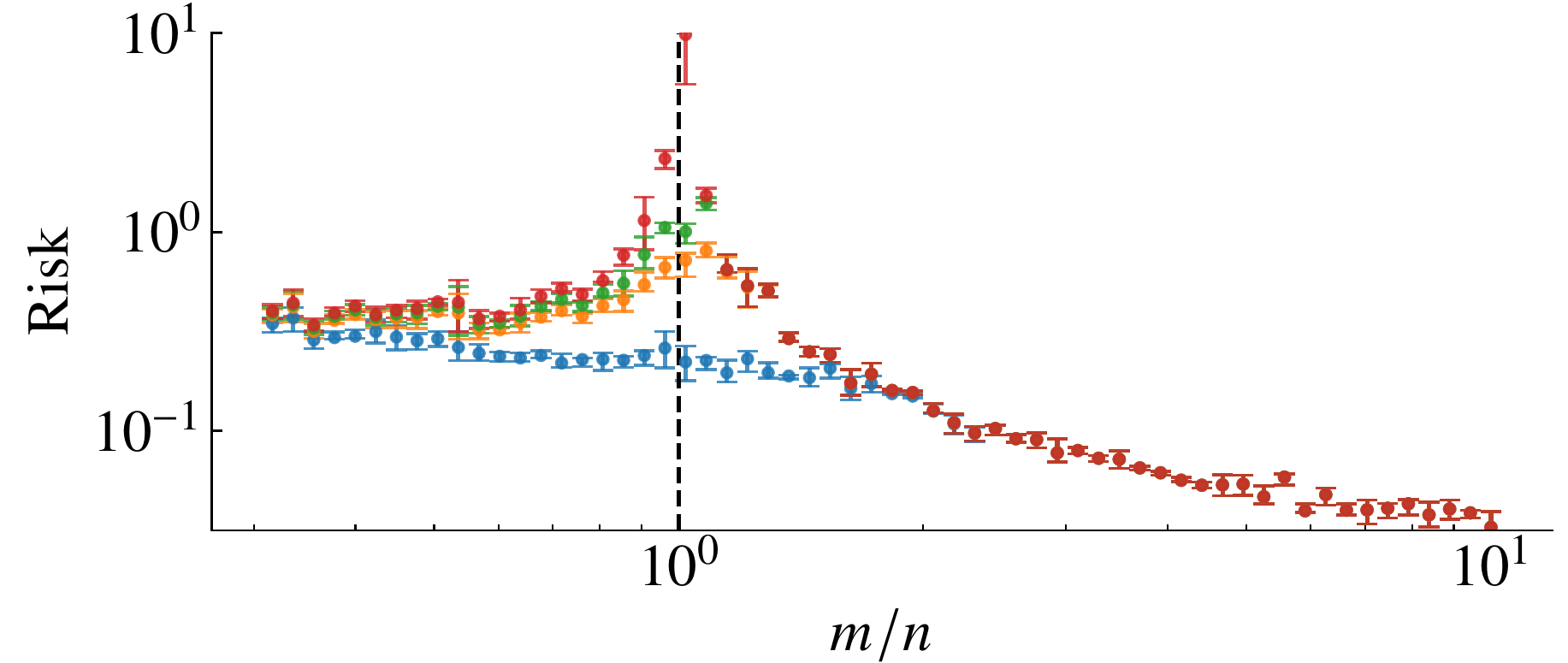}}\\
    \subfloat[Lasso regression]{\includegraphics[width=0.4\textwidth]{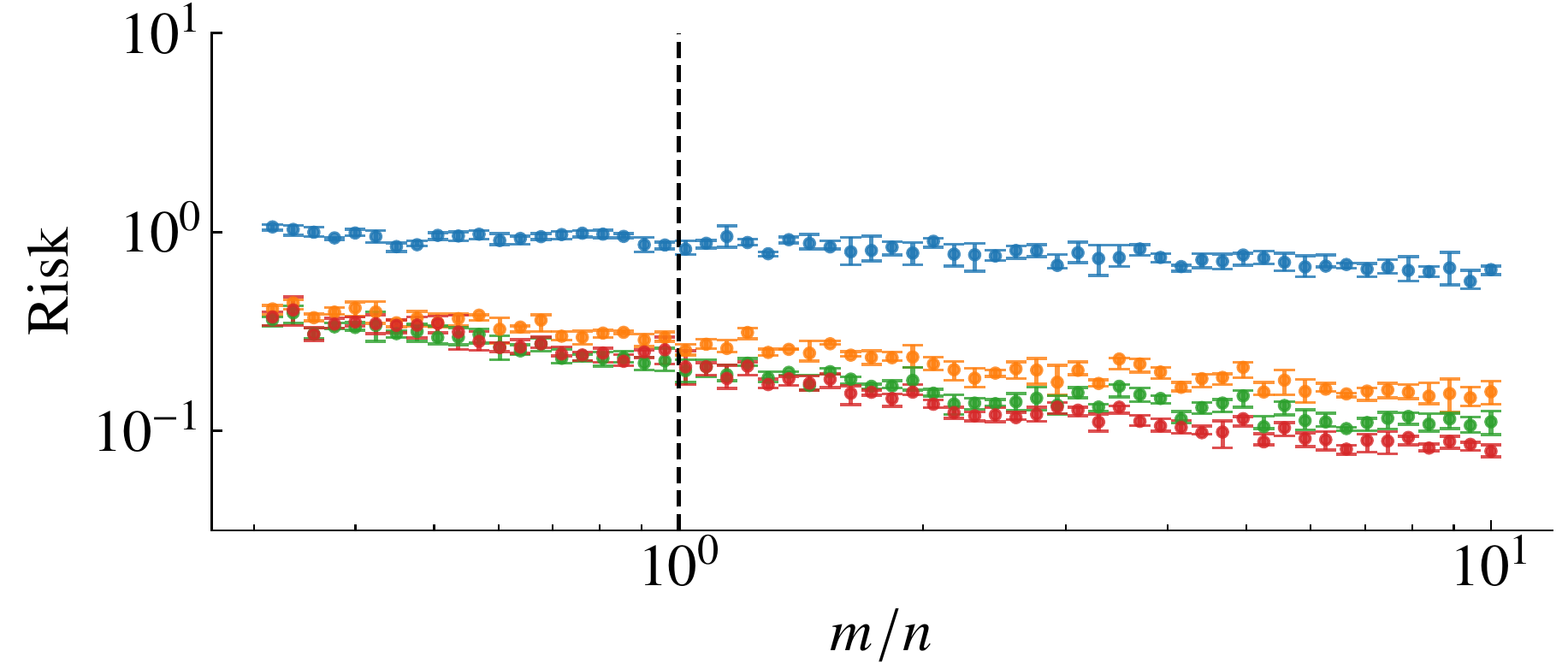}}
    \subfloat[Adversarial training  $\ell_\infty$  ]{\includegraphics[width=0.4\textwidth]{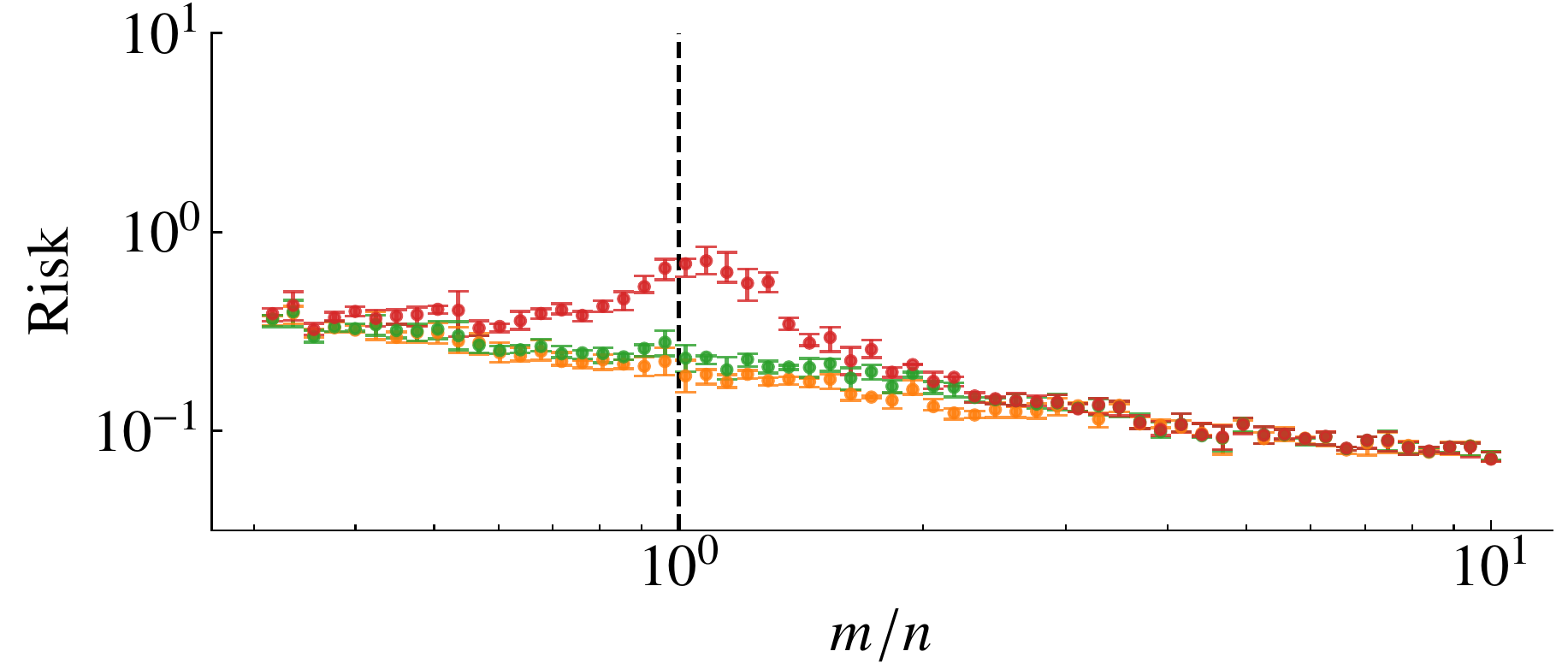}}\\
    \caption{\textbf{Test MSE:  latent feature model}. We plot the MSE in a test dataset of size $n_\text{test} = 100$ for the parameters estimated by the different methods. The setup is exactly the same as Fig.~\ref{fig:training-error-advtraining-latent}.}
    \label{fig:risk-advtraining-latent}
\end{figure}

\end{document}